\newcommand{\E}{\mathbb{E}}
\newcommand{\R}{\mathbb{R}}
\newcommand{\Ac}{\mathcal{A}}
\newcommand{\Dc}{\mathcal{D}}
\newcommand{\Fc}{\mathcal{F}}
\newcommand{\Kc}{\mathcal{K}}
\newcommand{\Lc}{\mathcal{L}}
\newcommand{\Xc}{\mathcal{X}}
\newcommand{\Yc}{\mathcal{Y}}
\newcommand{\fsd}{\succeq_{(1)}}
\newcommand{\sfsd}{\succ_{(1)}}
\newcommand{\dist}{\mathop{\rm dist}}
\newtheorem{theorem}{Theorem}
\newtheorem{proof}{Proof}
\newtheorem{lemma}[theorem]{Lemma}
\newtheorem{corollary}[theorem]{Corollary}
\newtheorem{definition}[theorem]{Definition}
\begin{document}

\title{Risk--Averse Classification}
\author[1]{Constantine Alexander Vitt}
\author[2]{Darinka Dentcheva}
\author[1]{Hui Xiong}
\affil[1]{Rutgers University, Newark and New Brunswick, NJ}
\affil[2]{Stevens Insitute of Technology, Hoboken, NJ}

\maketitle


\begin{abstract}
We develop a new approach to solving classification problems, which
is bases on the theory of coherent measures of risk and risk sharing ideas. The proposed approach aims at designing a risk-averse classifier. The new approach allows for associating distinct risk functional to each classes. The risk may be measured by different (non-linear in probability) measures,

We analyze the structure of the new classifier design problem and establish its theoretical relation to known risk-neutral design problems. In particular, we show that the risk-sharing classification problem is equivalent to an implicitly defined optimization problem with unequal, implicitly defined but unknown, weights for each data point. We implement our methodology in a binary classification scenario on several different data sets and carry out numerical comparison with classifiers which are obtained using the Huber loss function and other loss functions known in the literature. 
We formulate specific risk-averse support vector machines in order to demonstrate the viability of our method.
\end{abstract}


\section{Introduction} 
\label{sec:introduction}

Classification is one of the fundamental tasks of the data mining and machine learning community.
The need for accurate and effectively solution of classification problems proliferates throughout the business world, engineering, and sciences.
In this paper, we propose a new approach to classification problems with the aim to develop a methodology for reliable risk-averse classifiers design which has the flexibly to allow customers choice of risk measurement for the misclassification errors in various classes.
The proposed approach has its foundation on the theory of coherent measures of risk and risk sharing. Although, this theory is well advanced in the field of mathematical finance and actuarial analysis, the classification problem does not fit the problem setting analyzed in those fields and the theoretical results on risk sharing are inapplicable here. The classification problem raises new issues, poses new challenges, and requires a dedicated analysis.

We consider labeled data consisting of $k$ subsets $S_1,\dots, S_k$ of $n$-dimensional vectors. The cardinality of $S_i$ is $|S_i|=m_i$, $i=1,\dots, k$.
Analytically, the classification problem consists of identifying a mapping $\phi$, whose image can be partitioned into $k$ subsets corresponding to each class of data, so that $\phi (\cdot)$ can be used as an indicator function of each class. We adopt the following definition.
\begin{definition}
\label{def:general}
A classifier is a vector function $\varphi:\R^n\to\R^d$ such that $\varphi(x) \in K_i$ for all $x\in S_i$, $i=1,\dots,k$, where $K_i\subset\R^d$ and $K_i\cap K_j=\emptyset$ for all $i,j=1,\dots,k$ and $i\neq j$.
\end{definition}
In our discussion, we assume that the classifier belongs to a certain functional family depending on a finite number of parameters, which we denote by $\pi\in\R^s$. The task is to choose a suitable values for the parameter $\pi$. An example of this point of view is given by the support vector machine, in which $k=2$, $\varphi(x;\pi) = v^\top x -\gamma$, $\pi= (v,\gamma)\in\R^{n+1}$, and $K_1= (0,+\infty)$, $K_2= (-\infty,0)$.

Some examples of this point of view are the following.
When support vector machine is formulated, we seek to distinguish two classes, i.e., $k=2$. The classifier is a linear function $\varphi(x;\pi) :\mathbb R^n\to \mathbb R$, defined by setting 
\[
\varphi(x;\pi)= v^\top x -\gamma \text{ for any } x\in\mathbb R^n. 
\]
The classifier is determined by the parameters $\pi= (v,\gamma)\in\R^{n+1}$. The regions the classifier maps to are $K_1= [0,+\infty)$, $K_2= (-\infty,0)$. 

Let us consider the case of separating many classes, e.g., $k\geq 3$ by the creating a linear classifier on the principle ``one vs. all''. 
Then effectively,  our goal is to determine functions $\varphi_j(x;a^j,b_j):=\langle a^j, x \rangle - b_j$, where $x$ is a data point from the feature space, $a^j\in\mathbb R^n$, $j=1,\dots k-1$, are the normals of the separating planes and $b_j$ determine the location of the $j$-th plane. Plane $j$ is meant to separate the data points from class $j$ from the rest of the data points. This means that
\begin{equation}
\label{e:mult-class-def}
\varphi_j(x;a^j,b_j) =\begin{cases}\geq 0 & \text{ for } x\in S_j\\
 < 0 & \text{ for } x\not\in S_j.
\end{cases}
\end{equation}
We define a $k-1\times n$ matrix $A$ whose rows are the vectors $a^j$, and a vector $b\in\mathbb R^{k-1}$ whose components are $b_j$. The classifier for this problem can be viewed as a vector function $\varphi(\cdot; A,b): \mathbb R^n\to \mathbb R^{k-1}$ by setting $\varphi (x; A,b) = Ax-b$. 
The parameter space is of form $\pi=(A,b)\in\mathbb R^{(k-1)(n+1)}$. 
Requirement \eqref{e:mult-class-def} means that
the regions $K_j$ are the orthants 
\begin{align*}
K_i & = \{z\in\mathbb R^{k-1}: \, z_i\geq 0, \, z_j < 0,\;\; j\ne i, \;j =1,\dots, k-1\}, \; i=1,\dots k-1;\\
K_k & = \{z\in\mathbb R^{k-1}: \, z_i < 0, \; i =1,\dots, k-1\}
\end{align*}
This setting may be used for classification in the anomaly detection scenario. Two approaches are possible. One setting may require to distinguish between several distinct normal regimes or features of normal operational status. In that case, the class $k$ may contain the anomalous 
instances, while classes $i=1,\dots k-1$ represent the normal operation. Another problem deals with several rare undesirable phenomena with distinct features.
In such a scenario, we may associate classes $i=1,\dots k-1$ with those anomalous events and class $k$ with a normal operation. 
When kernels are used, then the mapping $\varphi(x;\pi)$ becomes a composition of a projection mapping to the reduced feature space and a classifier mapping in the feature space.


\section{Loss Functions} 
\label{sec:loss_functions}

A key element, which distinguishes various classification approaches, is the choice of a loss function, which, typically, is one of the known risk functionals in statistical model fitting. The quality of every model is determined by analysis of the residuals, e.g. the error. Let us introduce the following notation. For a random observation $z\in\R^n$, we calculate $\varphi(z;\pi)$ and note that misclassification occurs when $\varphi(z;\pi)\not\in K_i$, while $z\in S_i$ for any $i=1,\dots, k$.
In statistical terms, we try to predict the membership $y\in\{1,\dots, k\}$ of a data point to one of the classes. 
The classification error can be defined as the distance of a particular record to the classification set, to which it should belong. 
Here the distance from a point $r$ to a set $K$ is defined by using a suitable norm in $\R^n$:
\[
\dist(r,K)  =\min\{ \|r-a\|: a\in K\}.
\]
Note that here we assumes implicitly that the set $K$ is convex and closed. Indeed, the sets $K_i$, $i=1\dots,k$ are closed convex set for most classification problems, as evidenced by the examples in the previous section.

As the records in every data class $S_i$, $i=1,\dots, k$ constitute a sample of an unknown distribution of a random vector $X^i$ defined on a probability space $(\varOmega,\Fc,P)$, we define the following random variables:
\begin{equation}
\label{d:error}
Z^i(\pi)=\dist(\varphi(X^i;\pi),{K_i}),\,\, i=1,\dots k,
\end{equation}
represent the misclassification of records in class $i$ when parameter $\pi$ is used. 
These are univariate random variables defined on the same probability space and are represented by the sampled observations
\[
Z^i_j(\pi)=\dist(\varphi(x_j;\pi),{K_i}) \text{ with } x_j\in S_i\quad j=1,\dots, m_i.
\]
The expected misclassification error for each class can be estimated as follows:
\[
\hat{Z}^i (\pi) = \sum_{x_j\in S_i} \frac{1}{m_i}\dist(\varphi(x_j;\pi),K_i)
\]
The following figure illustrates how the classification error for a certain binary classifier is measured.
\begin{figure}[!h]
\begin{center}
\vspace{-2cm}
\includegraphics[width=0.75\linewidth]{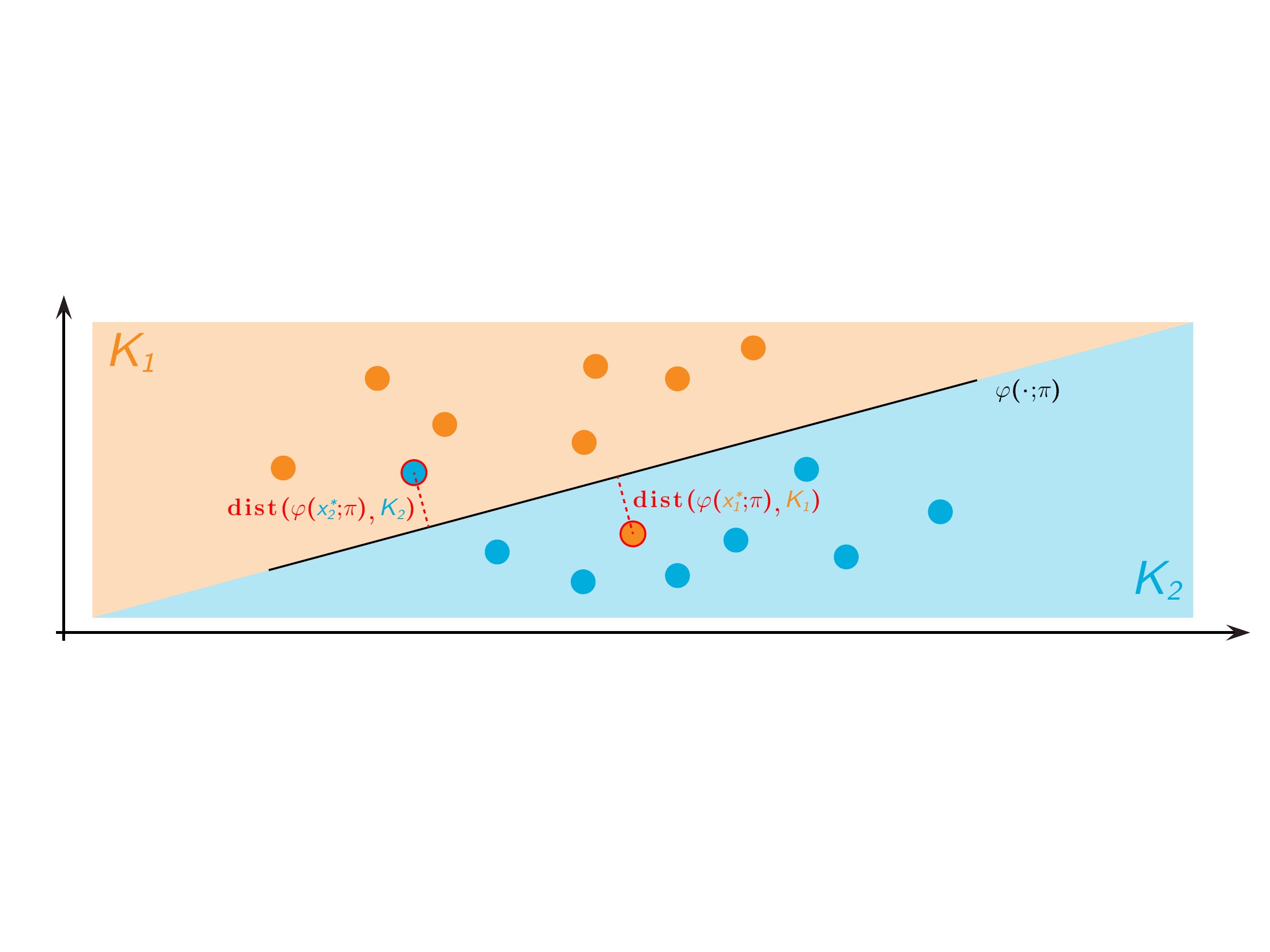}
\label{fig:error_dist}
\vspace{-2cm}
\caption{Classification error calculation}
\end{center}
\end{figure}
In the support vector machine the classification error is computed by 
\[
\dist \big(\varphi(x;v,\gamma), K_i\big)= \begin{cases} \max(0, \langle v, x \rangle -\gamma) & \text{ for } x\in S_1,\\
\max(0, \gamma-\langle v, x\rangle) & \text{ for } x\in S_2.
\end{cases}
\]
We classify every new observation $x$ in $S_i$, if $\dist \big(\varphi(x;v,\gamma), K_i\big) =0$, $i,=1,2$. 
In the case of SVM, the regions cover the entire image space of the classifier $R=K_1\cup K_2.$ Therefore, the condition
$\dist \big(\varphi(x;v,\gamma), K_i\big) =0$, $i,=1,2$,  always holds for exactly one class.

Observe that in the multi-class example, the regions $K_i$, $i=1,\dots k$ do not cover the entire image space of the classifier. Therefore, it is possible to observe a future instance $x$ such that $\dist \big(\varphi(x; A,b), K_i\big)>0$ for all $i=1,\dots k$. 
In that case,  we could classify according to the smallest distance 
\[
x\in S_j\quad \text{iff} \quad \dist \big(\varphi(x; A,b), K_j\big)= \min_{1\leq i\leq k} \dist \big(\varphi(x; A,b), K_i\big),\quad j\in\{1,\dots,k\}.
\]
Another problem arises, if the the minimum distance is achieved for several classes. The ambiguity could be resolved in several ways as a sequential classification procedure but this question is beyond the scope of our study.


\section{Robust Classification Design and Robust Statistics}

The design of robust estimators, robust classifiers in particular, has attracted attention of statisticians as well as of data scientists. Additionally, the distributions of the populations providing the currently available records may not be well represented by the current sample (e.g., it might have heavy tails, not be unimodal, etc.)
Furthermore, misclassification may lead to different cost with different probability depending on the error. An example for such a case is the damage caused by a hurricane. If we fail to predict correctly that a hurricane will take place in certain region, the cost of the damage depends on the features used for classification  and is highly non-linear with respect to those features (see \cite{davis2016analysis}).

We refer to \cite{huber2011robust,el2003robust,gotoh2017support,hastie2009elements} and the references therein for methods of robust classification design.
Most cases address binary classification.

Support vector machines are widely used and most popular classification tools. They appear also as part of sequential classification methods for multiple classes. Various approaches in the literature address the design of a robust classifier specifically for the support vector machine. 

We start with the formulation of an optimization problem based on the loss function expressing the minimization of the (estimated) expected total classification error.
The design of a binary classifier can be accomplished by solving the following optimization problem:
\begin{equation}
\label{eq:svmE1}
\begin{aligned}
\min_{v,\gamma,Z^1,Z^2}\, &\,  \frac{1}{m_1} \sum_{j=1}^{m_1} z_j^1 + \frac{1}{m_2} \sum_{j=1}^{m_2} z_j^2  \\
\text{s. t. }\, & \, \langle v, x_j^1\rangle - \gamma + z_j^1 \ge 0, \quad j=1,\dots,m_1,\\
&\, \langle v, x_{j}^2\rangle -\gamma  - z_{j}^2 \le 0, \quad j=1,\dots,m_2,\\
&\, \|v\|=1,\\
&\, Z^1\ge 0, Z^2\geq 0.
\end{aligned}
\end{equation}
In this formulation, $Z^1$ and $Z^2$ are random variables expressing the magnitude of the classification error for class 1 and class 2, respectively. 
Those variables have realizations $z_i^1$ and $z_i^2$. The parameters of the classifier are
$\pi = (v,\gamma)$. Note that proper calculation of the magnitude of classification error requires the use 
of the Euclidean norm of $v$. In that case,  $\langle v, x\rangle =\gamma$ is the equation of a plane and the value $\varphi(x;\pi)=\langle v, x\rangle -\gamma$ is
indicative of the position of the point $x$ relative to that plane: the sign of $\varphi(x;\pi)$ indicates on which side of the plane the point is located and the absolute value of $\varphi(x;\pi)$ indicates how far is the point from the plane. 

The problem is frequently replaced by the so-called soft-margin SVM with parameters $M>0$ and $\delta>0$, formulated as follows:
\begin{equation}
\label{eq:svmE}
\begin{aligned}
\min_{v,\gamma,Z^1,Z^2}\, &\,  M \left( \sum_{j=1}^{m_1} z_j^1 + \sum_{j=1}^{m_2} z_j^2 \right)+\delta\|v\|^2  \\
\text{s. t. }\, & \, \langle v, x_j^1\rangle - \gamma + z_j^1 \ge 1, \quad j=1,\dots,m_1,\\
&\, \langle v, x_{j}^2\rangle -\gamma  - z_{j}^2 \le -1, \quad j=1,\dots,m_2,\\
&\, Z^1\ge 0, Z^2\geq 0.
\end{aligned}
\end{equation} 
In problem \eqref{eq:svmE}, the normal vector $v$ can be of any positive length. Observe that multiplying the solution of problem
\eqref{eq:svmE}, $v$ and $\gamma$, by a positive constant does not change the separating plane. 
In problem \eqref{eq:svmE}, the estimated expected total classification error equals
\[
 \frac{1}{m_1\|v\|} \sum_{i=1}^{m_1} \max (z_i^1-1, 0) + \frac{1}{m_2\|v\|} \sum_{j=1}^{m_2} \max (z_j^2-1, 0)
\]
This means that the objective function does not necessarily minimize the expected classification error although the variables $z^1_j$ and $z^2_j$ are indicative of misclassification occurrence. Therefore, it only makes sense to compare the quality of normalized classifiers, where the length of $v$ is one.

Most notable approach to robust binary classification is provided by the theory and methods of robust statistics. In this approach, the model is fit using the Huber risk function, which is defined for $z\in S_i$, $i=1,2$ as follows:
\begin{equation}
\label{Huberrf}
L_H(z; v,\gamma) = \begin{cases}
\big[\max \big(0, 1+(-1)^i(\gamma-\langle v, z\rangle) \big)\big]^2 &\!\!\text{if} \\
\hfill (-1)^i(\gamma-\langle v, z\rangle) \geq -1\\
(-1)^i(\langle v, z\rangle - \gamma) \qquad \text{otherwise.}
\end{cases}
\end{equation}
Another approach is presented in \cite{lanckriet2003robust,el2003robust}, where the tools of robust optimization are employed. The idea there is that the future instance will come a distribution, which is close to the observed empirical distribution in some sense. Therefore, some set of distrributions is constructed, called uncertainty set, and the minimization is carried out over all distributions in that set. In \cite{lanckriet2003robust,el2003robust}, the uncertainty sets are defined by allowing all distributions on the smape space, which have the same mean and the same covariance as the estimated empirical mean and covariance. 
In \cite{ma2011robust} the authors look at the median hinge loss determined for each class and minimize the sum of the two median losses.

Our proposed approach suggests to minimize the classification error in a risk averse manner.
For this purpose, we propose new family of loss functions, which use coherent measures of risk.

\section{Coherent Measures of Risk} 
\label{sec:coherent_measure_of_risk}

Measures of risk are widely used in finance and insurance. Additionally, the signal to noise measures, used in engineering and statistics (Fano factor \cite{fano1947ionization} or the index of dispersion \cite{cox1966statistical}) are of similar spirit.

An axiomatic theory of measures of risk is presented in \cite{ogryczak1999stochastic,artzner1999coherent,follmer2011stochastic,kijima1993mean,rockafellar2006generalized}
In a more general setting risk measures are analyzed in \cite{ruszczynski2006optimization}. For $p\in [1,\infty]$
and a probability space $(\varOmega,{\Fc},P)$, we use the notation ${\Lc}_p(\varOmega,{\Fc},P)$, for the space of random variables with finite $p$-th moments. We use $\Lc_p(\varOmega)$ for short whenever no ambiguity arises.

\begin{definition} A {\em coherent measure of risk} is a functional $\varrho: \Lc_p(\varOmega)\to \R$ satisfying the following axioms:
\begin{description}
\item[Convexity:] For all $X,Y$, $\gamma\in [0,1]$, $\varrho(\gamma X + (1-\gamma)Y) \le \gamma \varrho(X) + (1-\gamma)\varrho(Y).$
\item[Monotonicity:] If  $X_\omega\ge  Y_\omega$ for $P$-a.a $\omega\in\varOmega$, then $\varrho(X)\ge \varrho(Y)$.
\item[Translation Equivariance:] For any $a\in \R$,  $\varrho(X+a)=\varrho(X)+ a$ for all $X$.
\item[Positive Homogeneity:] If $t>0$ then $\varrho(t X)=t\varrho(X)$ for any $X$.
\end{description}
\end{definition}
For an overview of the theory of coherent measures of risk, we refer to  \cite{shapiro2014lectures} and the references therein.

A risk measure $\varrho(\cdot)$  is called \emph{law-invariant} if $\varrho(X)=\varrho(Y)$ whenever the random variables $X$ and $Y$ have the same distributions. It is clear that in our context, only law invariant measures of risk are relevant.

The following result is know as a dual representation of coherent measures of risk (cf.\cite{shapiro2014lectures}).
The space
${\Lc}_p(\varOmega)$ and the space ${\Lc}_q(\varOmega)$ with $\frac{1}{p} + \frac{1}{q} =1 $ are viewed as paired vector spaces with respect to the bilinear form
\begin{equation}
\label{scprod-a}
\langle \zeta,Z \rangle =\int_\varOmega \zeta (\omega) Z(\omega) dP(\omega),\;
\zeta\in \Lc_q(\varOmega),\;Z\in \Lc_p(\varOmega).
\end{equation}
For any $\zeta\in \Lc_p(\varOmega)$, we can view $\langle \zeta, Z\rangle$ as the expectation $\E_Q[Z]$ taken
with respect to the probability measure $dQ=\zeta dP$, defined by the density $\zeta$, i.e., $Q$ is absolutely  continuous with respect to $P$ and its Radon-Nikodym derivative is $dQ/dP=\zeta$.
For any finite-valued coherent measure of risk  $\rho$, a convex subset $\Ac$  of probability density functions $\zeta\in \Lc_q(\varOmega)$ exists, such that for any random variable $Z\in {\Lc}_p(\varOmega)$, it holds
\begin{equation}
\label{dualrep} \rho(Z)=\sup_{\zeta \in \Ac} \langle \zeta, Z\rangle =\sup_{dQ/dP\in \Ac}\E_Q [Z].
\end{equation}
This result reveals how measures of risk provide robustness with respect to the changes of the distribution. Their application constitutes a new approach to robust statistical inference.

For a random variable $X\in \Lc_p (\varOmega)$ with distribution
function $F_X(\eta)=P\{X \le \eta\} $, we consider the survival function
\[
\bar{F}_X(\eta) = P(X>\eta)
\]
and the left-continuous inverse of the cumulative distribution function defined as follows:
\[
 F^{(-1)}_X(\alpha) = \inf\ \{ \eta : F_X(\eta) \ge \alpha \} \quad\text{for}\quad 0 < \alpha < 1.
\]
It is clear that $F^{(-1)}_X(\alpha)$ is the left $\alpha$-quantile of $X$. 

We intend to apply the theory to investigate the distribution of classification errors and that is why we have a preference to small outcomes (small errors).
We define the \emph{Value at Risk} at level $\alpha$ of a random error $X$ by setting
\[
{\rm VaR}_{\alpha}(X) = F^{(-1)}_X(1-\alpha),
\]
which implies that 
\[
P(V> {\rm VaR}_{\alpha}(X)) \leq \alpha.
\]
The risk here is defined as the probability of the error $X$ obtaining a large value. For a given $\alpha$, we can minimize  the value at risk by appropriately selecting the parameters of the classifier. This point of view corresponds to minimizing the probability of misclassification. 
Although Value at Risk is intuitively appealing measure, it is not coherent.

In the theory of measures of risk a special role is played by the functional called the Average Value-at-Risk and denoted ${\rm AVaR}(\cdot)$ (see \cite{acerbi2002coherence,ogryczak2002dual,rockafellar2002conditional}). The \emph{Average Value at Risk} of $X$ at level $\alpha$ is defined as
\begin{equation}
\label{eq:AVAR}
{\rm AVaR}_{\alpha}(X) = \frac{1}{\alpha} \int_0^{\alpha} {\rm VaR}_t(X)\, dt.
\end{equation}
Consider the integrated survival function of the random variable $X$,
\[
\bar{F}^{(2)}_X(\eta) = \int_\eta^\infty \bar{F}_X(t)\big]\; dt=\mathbb{E}[(X-\eta)_+].
\]
The second equality is shown in \cite{ddgmtwo}. 
The \emph{upper Lorenz function} $\bar{F}^{(-2)}_X:\R\to\overline{\R}$ is introduced in \cite{ddgmtwo} as a counterpart of the absolute Lorenz function 
(cf. \cite{Lorenz:1905,arnold1987lorenz,gastwirth1972estimation}). It is defined as follows:
\begin{equation}
\label{ulf}
\bar{F}^{(-2)}_X(\alpha)= \int_{\alpha}^1F_X^{-1}(t)\,dt\quad \text{for } 0< \alpha < 1.
\end{equation}
Additionally, $\bar{F}^{(-2)}_X(1)=0$, $\bar{F}^{(-2)}_X(0)=\mathbb{E}(X)$, and $\bar{F}_X^{(-2)}(\alpha)= -\infty$  for $\alpha\not\in[0,1]$.
The function $\bar{F}^{(2)}_X(\cdot)$ is concave because its derivative is monotonically non-increasing.
It is shown in \cite{ddgmtwo} that the Fenchel conjugate function of  the integrated survival function $\bar{F}^{(2)}_X(\cdot)$ is the function
$-\bar{F}^{(-2)}_X(\cdot+1)$. 
This statement is a counterpart of the conjugate duality relation for the absolute Lorenz curve, which has been first established in
 \cite[Theorem 3.1]{OgRu:2002}.
From the definition of the upper Lorenz function, we obtain that it represents the Average Value-at-Risk: 
 \begin{equation}
\label{ulf-cont}
\bar{F}^{(-2)}_X(1-\alpha)= \int_{1-\alpha}^1{\rm VaR}_{1-t}(X)\,dt = \int_0^{\alpha}{\rm VaR}_{\beta}(X)\,d\beta\quad \text{for } 0< \alpha < 1.
\end{equation}
We obtain
\[
{\rm AVaR}_\alpha(X)= \frac{1}{\alpha} \bar{F}^{(-2)}_X(1-\alpha) \quad \text{for } 0< \alpha < 1.
\]
Thus, using the conjugate duality relation from  \cite{ddgmtwo}, we obtain 
\begin{align*}
{\rm AVaR}_\alpha(X) & = \frac{1}{\alpha} \bar{F}^{(-2)}_X(1-\alpha)
 = -\frac{1}{\alpha} \sup_\eta\Big\{ -\alpha\eta- \mathbb{E}[\max(0,X-\eta)]\Big\}\\
& =\inf_\eta\Big\{ \eta + \frac{1}{\alpha} \mathbb{E}[\max(0,X-\eta)]\Big\}.
\end{align*}
This is the representation (cf. also \cite{shapiro2014lectures}) suitable for optimization problems. 

Due to Kusuoka theorem (\cite{kusuoka2001law},\cite[Thm. 6.24]{shapiro2014lectures}), every law invariant, finite-valued
coherent measure of risk on $\Lc^{p}(\varOmega)$ for non-atomic probability space can be represented as a mixture of Average Value-at-Risk at all probability levels.
This result can be extended for finite probability spaces with equally likely observations.
Kusuoka representations allows to extend statistical estimators of Lorenz curves to spectral law-invariant measures of risk as shown in \cite{dentcheva2010shape}. Central limit theorems for general composite risk functionals is established in \cite{dentcheva2016statistical}.

Other popular coherent measures of risk (when small outcomes are preferred) include the upper mean-semi-deviations of order $p$, defined as
\begin{equation}
\label{ro-semi}
\sigma^+_p[Z] := \E[Z] + \kappa\Big( \E\Big[\big(Z-\E[Z]\big)_+^p\Big]\Big)^{1/p},
\end{equation}
where $p\in[1,\infty)$ is a fixed parameter. It is well defined for all random variables $Z$ with finite $p$-th order moments and is coherent for $\kappa\in [0,1]$.
In the special case of $p=1$, the upper semi-deviation is equal to 1/2 of the absolute deviation, i.e., 
\[
\E\Big[\big(Z-\E[Z]\big)_+\Big] = \frac{1}{2} \E\Big[\big|Z-\E[Z]\big|\Big]
\]
Other classes of coherent measures of risk were proposed and analyzed in \cite{cheridito2009risk,dentcheva2010kusuoka,krokhmal2007higher,ogryczak2002dual,shapiro2014lectures} and the references therein.

In \cite{rockafellar2008risk}, the use of coherent measures of risk for generalized regression and model fit was proposed.
This point of view was also utilized in SVM in the report \cite{gotoh2017support}.
While those works recognize the need of expressing different attitude to errors in fitting statistical models, the authors propose using one overall measure of risk as an objective in the regression problem, respectively in the SVM problem.
The classification design based on a single measure of risk does not allow for differentiation between the classes.
Our point of view is that \emph{different attitude should be allowed to classification errors for the different classes}.


\section{Risk Sharing Preliminaries} 
\label{sec:risk_sharing_preliminaries}

The notion of risk sharing and analysis of this topic is a subject of intensive investigations in the community of economics, quantitative finance and risk management. This is due to the fact that the sum of the risk of each component in a system does not equal the risk of the entire system.
Risk allocation assumes that there is a quantitative assessment undertaken by a higher authority
within a firm, which  divides the firm's costs between the constituents.
The main focus in the extant literature on risk-sharing is on the choice of decomposition of a random variable $X$ into $k$ terms $X=X^1+\dots + X^k$, so that when each component is measured by a specific risk measure, the associated total risk is in some sense optimal. The variable $X$ represents the total random loss of the firm and the question addressed is about splitting the loss among the  constituents.
Assigning coherent measures of risk $\varrho_i$ to each term $X^i$, the adopted point of view is that the outcome $\big(\varrho_1(X^1), \dots, \varrho_k(X^k)\big)$ should be Pareto-optimal among the feasible allocations.

The main results in risk-sharing theory accomplish the decomposition of $X$ into terms by looking at the infimal convolution of the measures of risk, which is defined as follows. Given convex functions $f_i:\R^n\to \R$, $i=1,\dots k$, their \emph{infimal convolution} is the function 
$f_{1}\Box \cdots \Box f_{k} :\R^n\to \R$  (see,\cite[p. 57]{Rock-duality}) defined by
\[
[f_{1}\Box \cdots \Box f_{k}] (x) = \inf \{ f_{1}(x_1)+\dots + f_{k}(x_k) \,:\; x_1+\dots +x_k=x \}.
\]
The infimal convolution is a convex function and its Fenchel-conjugate satisfies is the sum of the conjugate function $f_i^*$,  $i=1,\dots, k$, i.e.,
\[
[f_{1}\Box \cdots \Box f_{k}]^{*}=f_{1}^{*}+\cdots +f_{k}^{*}.
\]
The risk-sharing problem amounts to the evaluation of the infimal convolution 
\[
[\varrho_{1}\Box \cdots \Box \varrho_{k}] (X).
\]
It is observed (see, e.g., \cite{landsberger1994co,ludkovski2008comonotonicity}) that the random variables $X^i$, $i=1,\dots, k$, which solve this problem, satisfy a co-monotonicity property as follows
\[
\big(X^i(\omega) -X^i(\omega')\Big)\big(X^j(\omega) -X^j(\omega')\Big) \geq 0,\quad\text{for all } \omega,\omega'\in\varOmega,\;\; i,j=1,\dots,k.
\]
We shall discuss the optimality of a risk allocation decision in due course. At the moment, 
we note that the problem setting and the results associate with risk sharing of losses in financial institutions are inapplicable to the classification problem. We cannot expect co-monotonicity properties of the class errors because not all decomposition of the total random error can be obtained via some classifier. The presence of constraints in the optimization problem, the functional dependence of the misclassification error on the classifier's parameters, and the complex nature of design problem require dedicated analysis.


\section{Risk Sharing in Classification} 
\label{sec:risk_sharing_in_classification}

If the distribution of the vectors $X^i$, $i=1,\dots, k$, are known, then the optimal risk-neutral classifier would be obtained by minimizing the expected error. 
This would be the solution of the following optimization problem:
\begin{equation}
\label{p:general-E}
\begin{aligned}
\min\, &\, \sum_{i=1}^k \mathbb E\big[ Z^i(\pi)\big]\\
\text{subject to } &\, Z^i(\pi) = \dist(\varphi(X^i;\pi),{K_i}),\quad i=1,\dots, k,\\
&\, \pi\in\mathcal D.
\end{aligned}
\end{equation}

We shall introduce the notion of a risk-averse classifier. Let a set of labeled data, a parametric classifier family
$\varphi(\cdot;\pi)$ with the associated collection of sets $K_i$, $i=1\dots,k$, and the law-invariant coherent risk measures $\varrho_i$, $i=1\dots,k$ be given. 
The presumption is that we have different attitude to misclassification risk in the various classes and the total risk is shared among the classes according to risk-averse preferences. 

We assume throughout that the set of feasible parameters $\pi$ is a closed convex set $\mathcal D\subseteq \mathcal R^s.$
Let $\Yc$ denote the set of all random vectors $(Z^1(\pi),\dots,Z^k(\pi))$ obtained as $Z^i(\pi)=\dist(\varphi(X^i;\pi),{K_i})$, i.e., 
$\Yc$ is the set of \emph{all attainable classification errors} considered as random vectors in the corresponding probability space. 
In the classification problem, we deal with their representation from the available sample calculated as follows:
\[
z^i_j(\pi)=\dist(\varphi(x_j;\pi),{K_i}), x_j\in S_i,\quad j=1,\dots, m_i,\;\; i=1,\dots k.
\]
for a given parameter $\pi\in \mathcal D$.
\begin{definition}
A vector $w\in \R^k$ represents an \emph{attainable risk allocation} for the classification problem, if a parameter $\pi\in\mathcal D$ exists such that 
\[
w = \big(\varrho_1(Z^1(\pi)), \dots, \varrho_k(Z^k(\pi))\big)\in\R^k\quad\text{for}\quad\big(Z^1(\pi),\dots,Z^k(\pi)\big)\in \Yc.
\]
\end{definition}
We denote the set of all attainable risk allocations by $\Xc$.
Assume that a partial order on $\R^k$ is induced by a pointed convex cone $\Kc\subset\R^k$, i.e.,
\[
v\preceq_\Kc w \text{ if and only if } w-v\in \Kc.
\]
Recall that a point $v\in A\subset \R^k$ is called $\Kc$-\emph{minimal point of the set} $A$ if no point $w\in A$ exists such that $v-w\in\Kc$.
If $\Kc=\R^k_+$, then the notion of $\Kc$-minimal points of a set corresponds to the well-known notion of Pareto-efficiency or Pareto-optimality in $\R^k$.
\begin{definition}
A classifier $\varphi(\cdot;\pi)$ is called $\Kc$-optimal risk-averse classifier, if its risk-allocation is a $\Kc$-minimal element of $\Xc$.
If $\Kc=\R^k_+$, then the classifier is called Pareto-optimal.
\end{definition}
From now on, we focus on Pareto-optimality, but our results are extend-able to the case of more general orders defined by pointed cones.
\begin{definition}
A \emph{risk-sharing classification problem} (RSCP) is given by the set of labeled data, a parametric classifier family
$\varphi(\cdot;\pi)$ with the associated collection of sets $K_i$, $i=1\dots,k$, and a set of law-invariant risk measures $\varrho_i$, $i=1\dots,k$. 
The risk-sharing classification problem consists of identifying a parameter $\pi\in\mathcal D$ resulting in a Pareto-optimal classifier $\varphi(\cdot;\pi)$.
\end{definition}
We shall see that the Pareto-minimal risk allocations are produced by random vectors, which are minimal points in the set $\Yc$ with respect to the usual stochastic order, defined next.
\begin{definition}
\label{d:order}
A random variable $Z$ is stochastically larger than a random variable $Z'$ with respect to the usual stochastic order (denoted $Z\fsd Z'$), if
\begin{equation}
\label{e:def-fsd}
\mathbb P  (Z >\eta)\geq \mathbb P  (Z' >\eta)\quad \forall\, \eta\in\R,
\end{equation}
or, equivalently, $F_{Z}(\eta)\leq F_{Z'}(\eta)$.  

The relation is strict (denoted $Z\sfsd Z'$), if additionally, inequality \eqref{e:def-fsd} is strict for some $ \eta\in\R$.

A random vector $\mathbf{Z}= (Z_1,\dots Z_k)$ is stochastically larger than $\mathbf{Z}'= (Z'_1,\dots Z'_k)$
(denoted $\mathbf{Z} \succeq \mathbf{Z}'$) if
$Z_i\fsd Z_i'$ for all $i=1,\dots k$. The relation is strict if for some component $Z_i\sfsd Z_i'$.
\end{definition}
The random vectors of $\Yc$, which are non-dominated with respect to this order will be called \emph{minimal points of $\Yc$}.

For more information on stochastic orders see, e.g., \cite{shaked2007stochastic}.

The following result is known for non-atomic probability spaces. We verify it for a sample space in order to deal with the empirical distributions.
\begin{theorem}
\label{t:consistency} Suppose  the probability space $(\Omega,\Fc,P)$ is finite with equal probabilities of all simple events.  Then every law-invariant risk
functional $\rho$ is consistent with the usual stochastic order if and only if it satisfies the monotonicity
axiom. If $\rho$ is strictly monotonic with respect to the almost sure relation, then $\rho$ is consistent with the strict dominance relation, i.e. $\rho(Z_1)<\rho(Z_2)$ whenever $Z_2\sfsd Z_1$.
\end{theorem}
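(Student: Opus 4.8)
The plan is to reduce everything to the monotonicity axiom by a single monotone rearrangement (coupling) argument that is available precisely because the atoms are equally likely. Write $\Omega=\{\omega_1,\dots,\omega_N\}$ with $P(\{\omega_j\})=1/N$, so a random variable is identified with the vector of its values, its law is determined solely by the multiset of those values, and $\rho(Z)$ depends only on the order statistics $z^{(1)}\le\dots\le z^{(N)}$. One implication is immediate and uses neither law-invariance nor the equiprobable structure: if $\rho$ is consistent with the usual stochastic order and $X\ge Y$ almost surely, then $\{Y>\eta\}\subseteq\{X>\eta\}$ up to a null set for every $\eta$, so $P(X>\eta)\ge P(Y>\eta)$, i.e.\ $X\fsd Y$; consistency then gives $\rho(X)\ge\rho(Y)$, which is exactly the monotonicity axiom.

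For the converse I would first isolate the key lemma: on the equiprobable space, $Z_2\fsd Z_1$ holds if and only if $z_2^{(i)}\ge z_1^{(i)}$ for every $i=1,\dots,N$. The distribution-function form $F_{Z_2}(\eta)\le F_{Z_1}(\eta)$ for all $\eta$ is equivalent to the reversed ordering of the left quantiles $F_{Z_2}^{(-1)}(\alpha)\ge F_{Z_1}^{(-1)}(\alpha)$ for all $\alpha\in(0,1)$; and on an $N$-atom equiprobable space the quantile function is constant, equal to $z^{(i)}$, on each interval $((i-1)/N,\,i/N]$, so the quantile inequality reads precisely as the pointwise domination of the order statistics.

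Granting the lemma, the remaining implications follow by coupling on $\Omega$ itself. Define $\tilde Z_\ell(\omega_i)=z_\ell^{(i)}$ for $\ell=1,2$; then $\tilde Z_\ell$ has the same order statistics as $Z_\ell$, hence $\tilde Z_\ell\stackrel{d}{=}Z_\ell$, while $\tilde Z_2\ge\tilde Z_1$ holds pointwise on $\Omega$. The monotonicity axiom gives $\rho(\tilde Z_2)\ge\rho(\tilde Z_1)$, and law-invariance upgrades this to $\rho(Z_2)\ge\rho(Z_1)$, establishing consistency. For the strict claim, if $Z_2\sfsd Z_1$ the lemma still yields $z_2^{(i)}\ge z_1^{(i)}$ for all $i$, and strictness rules out equality of all order statistics (that would force equal laws); thus $z_2^{(i_0)}>z_1^{(i_0)}$ for some $i_0$, so $\tilde Z_2\ge\tilde Z_1$ almost surely with strict inequality on the positive-probability event $\{\omega_{i_0}\}$. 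Strict monotonicity with respect to the almost sure relation then gives $\rho(\tilde Z_2)>\rho(\tilde Z_1)$, and law-invariance gives $\rho(Z_2)>\rho(Z_1)$.

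I expect the main obstacle to be the lemma, specifically the passage from the distribution-function formulation of dominance to pointwise domination of order statistics. This is exactly where the equal-probability hypothesis is indispensable: it aligns the jumps of the empirical quantile function with the uniform grid $\{i/N\}$, so that one monotone rearrangement realizes both laws simultaneously on the common sample space. With unequal weights the order statistics no longer correspond to a shared partition of $(0,1)$, and this clean coupling, which drives all three conclusions, is no longer available.
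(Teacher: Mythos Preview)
Your proposal is correct and follows essentially the same route as the paper: both construct a monotone coupling on the common space by assigning to $\omega_i$ the $i$-th order statistic (the paper writes this as $\hat Z_\ell=F_{Z_\ell}^{-1}(U)$ with $U(\omega_i)=i/N$, which on the equiprobable space coincides exactly with your $\tilde Z_\ell(\omega_i)=z_\ell^{(i)}$), then applies monotonicity and law-invariance. Your write-up is in fact more complete than the paper's, which dispatches the converse direction as ``straightforward'' and does not spell out the strict case.
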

\begin{proof}
Assuming that $\Omega=\{ \omega_1,\dots,\omega_m\}$, let the random variable $U(\omega_i)=\frac{i}{m}$ for all $i=1,\dots, m$.
If $Z_2\fsd Z_1$, then defining $\hat{Z}_1:=F_{Z_1}^{-1}(U)$ and $\hat{Z}_2:=F_{Z_2}^{-1}(U)$,
we obtain $\hat{Z}_2(\omega)\geq \hat{Z}_1(\omega)$ for all $\omega\in \Omega$. Due to the monotonicity axiom, $\rho(\hat{Z_2}) \ge \rho(\hat{Z_1})$. The random variables $\hat{Z}_i$ and $Z_i$,  $i=1,2$, have the same distribution by construction. This entails that $\rho({Z_2}) \ge \rho({Z_1})$ because the risk measure is law invariant.
Consequently, the risk measure $\rho$ is consistent with the usual stochastic order. The other direction is straightforward.
\end{proof}
This observation justifies our restriction to risk measures, which are consistent with the usual stochastic order, also known as the first order stochastic dominance relation. Furthermore, when dealing with non-negative random variables as in the context of classification, then strictly monotonic risk measures associate no risk only when no misclassification occurs, as shown by the following statement.
\begin{lemma}
If $\rho$ is a law invariant strictly monotonic coherent measure of risk, then
\begin{equation}
\label{a:norm}
\begin{aligned}
\rho(Z) &> 0 \text{ for all random variables } Z\geq 0\,\, a.s., Z\not\equiv 0\\
\rho(Z) & < 0  \text{ for all random variables } Z\leq 0\,\, a.s., Z\not\equiv 0.
\end{aligned}
\end{equation}
\end{lemma}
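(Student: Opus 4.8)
The plan is to reduce both inequalities to a single application of strict monotonicity against the constant random variable equal to $0$. First I would determine the value of $\rho$ on the zero variable: by positive homogeneity, $\rho(0)=\rho(2\cdot 0)=2\rho(0)$, which forces $\rho(0)=0$. Equivalently, translation equivariance gives $\rho(c)=\rho(0)+c=c$ for every constant $c$, so in particular $\rho(0)=0$.

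For the first claim, take $Z\ge 0$ a.s. with $Z\not\equiv 0$. Since $Z\ge 0$ almost surely and $P(Z\neq 0)>0$, we have $P(Z>0)>0$; that is, $Z\ge 0$ almost surely with strict inequality on a set of positive probability. Strict monotonicity with respect to the almost sure relation then yields $\rho(Z)>\rho(0)=0$. For the second claim, let $Z\le 0$ a.s. with $Z\not\equiv 0$, so that $P(Z<0)>0$ and hence $0\ge Z$ almost surely with strict inequality on a set of positive probability. The same strict monotonicity property gives $0=\rho(0)>\rho(Z)$, which is the desired inequality.

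The only genuinely delicate point is the correct reading of \emph{strictly monotonic with respect to the almost sure relation}: it must be applied in the form that $X\ge Y$ almost surely together with $P(X>Y)>0$ implies $\rho(X)>\rho(Y)$, and one has to verify that $Z\not\equiv 0$ combined with the one-sided bound indeed produces such a positive-probability strict gap against the constant $0$ rather than merely a non-strict one. Once this is secured, the coherence axioms of positive homogeneity (or translation equivariance) supply $\rho(0)=0$ and the conclusion follows; note that law-invariance, although assumed in the hypothesis, plays no role in this particular argument.
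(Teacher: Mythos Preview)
Your argument is correct and mirrors the paper's proof almost exactly: both establish $\rho(0)=0$ via positive homogeneity and then invoke strict monotonicity against the zero variable to obtain the two inequalities. Your added remarks (the alternative via translation equivariance, the explicit unpacking of strict monotonicity, and the observation that law-invariance is unused) are accurate and make the exposition slightly more careful than the paper's, but the approach is the same.
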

\begin{proof}
Denote the random variable, which is identically equal zero by $\mathbf{0}$. Notice that $\rho(\mathbf{0}) = \rho(2\cdot\mathbf{0}) = 2\rho(\mathbf{0})$, which implies that  $\rho(\mathbf{0})=0$.
If $Z\geq 0$ a.s. and $Z\not\equiv 0$, then $\rho(Z)> \rho(\mathbf{0})=0$ by the strict monotonicity of $\rho$.
The second statement follows analogously.
\end{proof}
 This statement implies that  $\varrho_i(Z^i(\pi))\geq 0$ for all $\pi\in\mathcal D$ and for all $i=1,\dots k$  and, therefore, the attainable allocations lie in the positive orthant, i.e.,   $\Xc\subseteq \mathbb R^k_+$.

From now on, we dopt the following assumptions:
\begin{description}
  \item[(A1)] The risk measures $\rho^i$ used for evaluation of classification errors in classes $i=1,...,k$
are coherent, law invariant, and finite-valued.
\item[(A2)] The sets $K_i$, $i=1,\dots k$ and  $\Dc\subseteq\R^s$ are non-empty, closed and convex. 
\end{description}
that  
\begin{theorem}
\label{t:continuity}
Assume (A1), (A2) and let the support of the random vectors $X^i$,  $i=1,\dots k$, be bounded. 
If the function $\varphi(x,\cdot)$ is continuous for every argument $x\in\R^n$, then the components of the attainable risk allocations 
$\rho_i(Z^i(\cdot))$, $i=1,\dots k$, are continuous functions.
If additionally, each component of the vector function $\varphi(x,\cdot)$ is an affine function, then $\rho_i(Z^i(\cdot))$, $i=1,\dots k$ are convex functions.
\end{theorem}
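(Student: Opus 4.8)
The plan is to treat the two assertions separately, exploiting the fact that both the distance function and the coherent risk measures carry convenient regularity that passes through the composition $\pi \mapsto \rho_i(Z^i(\pi)) = \rho_i\big(\dist(\varphi(X^i;\pi),K_i)\big)$. Two elementary structural facts will do most of the work. First, for a nonempty closed convex set $K_i$ the map $r \mapsto \dist(r,K_i)$ is both $1$-Lipschitz, i.e. $|\dist(r,K_i) - \dist(r',K_i)| \le \|r-r'\|$, and convex. Second, any coherent measure of risk is $1$-Lipschitz with respect to the $\Lc_\infty$-norm: from $Z \le Z' + \|Z-Z'\|_\infty$ together with the monotonicity and translation-equivariance axioms one gets $\rho(Z) \le \rho(Z') + \|Z-Z'\|_\infty$, and the symmetric bound yields $|\rho(Z)-\rho(Z')| \le \|Z-Z'\|_\infty$.

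For the continuity claim I would argue locally around an arbitrary $\pi_0 \in \mathcal D$. Fix a compact neighborhood $N$ of $\pi_0$ in $\mathcal D$, and let $C \subset \R^n$ be a compact set containing the (bounded) support of $X^i$. Since $\varphi$ is continuous, it is uniformly continuous on the compact set $C \times N$, so $\sup_{x \in C} \|\varphi(x;\pi) - \varphi(x;\pi_0)\| \to 0$ as $\pi \to \pi_0$. Combining this with the $1$-Lipschitz property of the distance gives
\[
\|Z^i(\pi) - Z^i(\pi_0)\|_\infty \le \sup_{x\in C}\|\varphi(x;\pi)-\varphi(x;\pi_0)\| \xrightarrow[\pi\to\pi_0]{} 0,
\]
i.e. $Z^i(\pi) \to Z^i(\pi_0)$ in $\Lc_\infty$. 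The $\Lc_\infty$-Lipschitz bound for $\rho_i$ then gives $|\rho_i(Z^i(\pi)) - \rho_i(Z^i(\pi_0))| \le \|Z^i(\pi)-Z^i(\pi_0)\|_\infty \to 0$, which is the desired continuity at $\pi_0$.

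For the convexity claim, assume each component of $\varphi(x;\cdot)$ is affine. Then for every fixed $x$ the map $\pi \mapsto \varphi(x;\pi)$ is affine, and composing the convex function $\dist(\cdot,K_i)$ with it shows that $\pi \mapsto \dist(\varphi(x;\pi),K_i)$ is convex. Consequently, for $\pi = \lambda\pi_1 + (1-\lambda)\pi_2$ with $\lambda\in[0,1]$ we obtain the pointwise, hence almost sure, inequality $Z^i(\pi) \le \lambda Z^i(\pi_1) + (1-\lambda)Z^i(\pi_2)$. Applying first monotonicity and then convexity of the coherent measure $\rho_i$ yields
\[
\rho_i(Z^i(\pi)) \le \rho_i\big(\lambda Z^i(\pi_1)+(1-\lambda)Z^i(\pi_2)\big) \le \lambda\rho_i(Z^i(\pi_1)) + (1-\lambda)\rho_i(Z^i(\pi_2)),
\]
which is exactly convexity of $\pi\mapsto\rho_i(Z^i(\pi))$.

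I expect the continuity argument to be the delicate part, precisely because $\rho_i$ is a functional on a space of random variables: one must upgrade the pointwise convergence $Z^i(\pi)(\omega)\to Z^i(\pi_0)(\omega)$ to convergence in a norm on which $\rho_i$ is known to be continuous. The boundedness of the support is exactly what makes this work, since it reduces the estimate to uniform continuity of $\varphi$ on a compact set and delivers $\Lc_\infty$-convergence, where the monotonicity-plus-translation-equivariance Lipschitz bound applies directly. In the convexity step the subtlety worth flagging is that convexity of $\rho_i$ alone is insufficient, since $Z^i(\cdot)$ is only convex, not affine, in $\pi$; the monotonicity axiom is what absorbs the pointwise inequality before convexity of $\rho_i$ is invoked.
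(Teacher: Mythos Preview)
Your argument is correct, and the convexity step matches the paper's: the paper carries out the explicit inequality $\dist(\varphi(x;\pi_\lambda),K_i) \le \lambda\,\dist(\varphi(x;\pi),K_i)+(1-\lambda)\,\dist(\varphi(x;\pi'),K_i)$ by choosing projection points in $K_i$, whereas you invoke the general fact that a convex function composed with an affine map is convex, but the content and the subsequent use of monotonicity followed by convexity of $\rho_i$ are the same.

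The continuity argument, however, takes a genuinely different route. The paper argues that $Z^i(\cdot)$ is continuous as a map into $\Lc_p(\varOmega)$ and then appeals to the standard result that a finite-valued convex functional on $\Lc_p$ is automatically continuous. You instead extract the $1$-Lipschitz bound $|\rho(Z)-\rho(Z')|\le\|Z-Z'\|_\infty$ directly from monotonicity plus translation equivariance, and establish $\Lc_\infty$-convergence of $Z^i(\pi)$ via uniform continuity of $\varphi$ on the compact set $C\times N$. Your route is more elementary and self-contained, since it avoids invoking the continuity theorem for convex functionals on Banach spaces and uses only the risk-measure axioms; the paper's route, on the other hand, would still work under weaker convergence ($\Lc_p$ rather than $\Lc_\infty$). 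One small caveat worth noting: your uniform-continuity step tacitly treats $\varphi$ as \emph{jointly} continuous on $C\times N$, while the hypothesis as written gives only continuity of $\varphi(x,\cdot)$ for each fixed $x$. The paper's own proof is equally informal on this point, and in every concrete classifier the paper considers $\varphi$ is affine and hence jointly continuous, so this is a cosmetic issue rather than a real gap.
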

\begin{proof}
 The distance functions $z\mapsto \dist(z,K_i)$ are continuous convex functions (see, e.g., \cite{beer1993topologies}) and $\dist(z,K_i)<\infty$ for all $z\in\R^n$. 
 Thus, the composition of the distance function with the continuous function $\varphi(x;\cdot)$ is continuous, meaning that the random variable 
 $Z^i (\pi) = \dist(\varphi(X^i;\pi),K_i)$ has realizations, which are continuous functions of $\pi$. Furthermore, the variables $Z^i$ have bounded support due to the boundedness assumption of the theorem.  Therefore, $Z^i (\cdot)$ is continuous with respect to the norm in the space $\mathcal{L}_p(\varOmega)$. 
 Since the risk measures $\rho_i(\cdot)$ are convex and finite, they are continuous on $\mathcal{L}_p$ for $p\geq 1$.  
 We conclude that its composition with the risk measure: $\rho_i(Z^i (\cdot))$, is continuous.

In order to prove convexity, let $\lambda\in (0,1)$ and let $\pi_\lambda = \lambda\pi +(1-\lambda)\pi'$. 

Let $z^i(\pi), z^i(\pi')\in K_i$ be the points such that 
\begin{gather}
\| \varphi(x;\pi) -z^i(\pi)\|= \min_{z\in K_i}\| \varphi(x;\pi) -z\|\\
 \| \varphi(x;\pi) -z^i(\pi')\|= \min_{z\in K_i}\| \varphi(x;\pi') -z\|
\end{gather}
We define $z_\lambda = \lambda z^i(\pi)+(1-\lambda) z^i(\pi')$. Due to the convexity of $K_i$, we have $z_\lambda\in K_i.$
As $\varphi(x,\cdot)$ is affine, we obtain
\[
\varphi(x;\pi_\lambda) = \lambda\varphi(x;\pi) +(1-\lambda)\varphi(x;\pi'). 
\]
This entails the following inequality for all $i=1,\dots k$ and all $z\in\R^d$:
\begin{align*}
\min_{z\in K_i}\| \varphi(x;\pi_\lambda) -z\| & \leq  \| \varphi(x;\pi_\lambda) -z^i_\lambda\| =  \| \varphi(x;\pi_\lambda) -\lambda z^i(\pi)-(1-\lambda) z^i(\pi')\| \\
& = \| \lambda\big(\varphi(x;\pi)-z^i(\pi)\big) +(1-\lambda)\big(\varphi(x;\pi')-z^i(\pi')\big)\big\| \\
&\leq 
\lambda\|\varphi(x;\pi)-z^i(\pi)\| +(1-\lambda)\|\varphi(x;\pi')-z^i(\pi'))\big\|\\
 & = \lambda\min_{z\in K_i}\| \varphi(x;\pi)-z\| +(1-\lambda)\min_{z\in K_i}(\varphi(x;\pi')-z)\big\|.
\end{align*}
Therefore,
\[
\dist(\varphi(x;\pi_\lambda),K_i)  \leq \lambda\dist(\varphi(x;\pi),K_i) +(1-\lambda)\dist(\varphi(x;\pi'),K_i).
\]
The monotonicity and convexity axioms for the risk measures imply that
\begin{multline*}
\rho_i \big(\dist(\varphi(X;\pi_\lambda),K_i) \big)
\leq \lambda \rho_i\big(\dist(\varphi(X;\pi),K_i) \big) 
+ (1-\lambda)\rho_i\big(\dist(\varphi(X;\pi'),K_i)\big).
\end{multline*}
\end{proof}
This result implies the existence of Pareto-optimal classifier. Furthermore, the convexity property allows us to identify the Pareto-optimal risk-allocations by using scalarization techniques.
\begin{corollary}
\label{c:exists}
  Assume (A1), (A2) and let the function $\varphi(x,\cdot)$ be affine for every argument $x\in\R^n$. Then a parameter $\pi$ defines a Pareto-optimal classifier $\varphi(\cdot, \pi)$ for the given RSCP if and only if a scalarization vector $w\in\R^k_+$ exists with $\sum_{i=1}^k w_i=1$, such that $\pi$ is a solution of the problem
\begin{equation}
\label{scalarproblem}
\min_{\pi\in\Dc} \sum_{i=1}^k w_i\rho_i\big(\dist(\varphi(X_i;\pi),K_i)\big).
\end{equation}
\end{corollary}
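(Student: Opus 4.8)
The plan is to recognize \eqref{scalarproblem} as the weighted-sum scalarization of a convex vector optimization problem and to invoke the classical correspondence between Pareto-minimal points and supporting hyperplanes. Writing $g_i(\pi):=\rho_i\big(\dist(\varphi(X^i;\pi),K_i)\big)$, Theorem~\ref{t:continuity} guarantees that each $g_i:\Dc\to\R$ is convex and continuous, since $\varphi(x,\cdot)$ is affine. The image of the map $g=(g_1,\dots,g_k)$ over $\Dc$ is precisely the attainable allocation set $\Xc$, and by the lemma establishing \eqref{a:norm} we have $\Xc\subseteq\R^k_+$. The key preparatory step is to pass to the \emph{upper image}
\[
\mathcal{G}:=\Xc+\R^k_+=\big\{y\in\R^k:\ y\ge g(\pi)\ \text{componentwise for some }\pi\in\Dc\big\},
\]
and to check that $\mathcal{G}$ is convex: if $y\ge g(\pi)$ and $y'\ge g(\pi')$, then convexity of each $g_i$ together with $\pi_\lambda:=\lambda\pi+(1-\lambda)\pi'\in\Dc$ yields $\lambda y+(1-\lambda)y'\ge g(\pi_\lambda)$, so the convex combination again lies in $\mathcal{G}$. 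Since $\Dc\ne\emptyset$ by (A2) and $\mathcal{G}$ contains a translate of $\R^k_+$, the set $\mathcal{G}$ is a full-dimensional, upward-closed convex set, which is exactly the structure scalarization exploits.

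For the sufficiency (``if'') direction I would argue directly. Suppose $w\in\R^k_+$, $\sum_i w_i=1$, and $\pi$ minimizes $\sum_i w_i g_i$ over $\Dc$. If $\pi$ were not Pareto-optimal, some $\pi'\in\Dc$ would satisfy $g_i(\pi')\le g_i(\pi)$ for all $i$, with strict inequality for some index $j$; when $w>0$ this forces $\sum_i w_i g_i(\pi')<\sum_i w_i g_i(\pi)$, contradicting minimality. (For weights lying on the boundary of $\R^k_+$ this reasoning produces only weak Pareto-optimality, which is the technical point discussed below.)

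The necessity (``only if'') direction carries the real content and uses convexity essentially. Let $\pi^*$ be Pareto-optimal with allocation $x^*=g(\pi^*)$, a minimal point of $\Xc$. Minimality gives $(x^*-\mathrm{int}\,\R^k_+)\cap\mathcal{G}=\emptyset$, since any $y\in\mathcal{G}$ strictly below $x^*$ would be dominated by some $g(\pi)\le y<x^*$, contradicting that $x^*$ is minimal; hence $x^*\notin\mathrm{int}\,\mathcal{G}$ and $x^*$ lies on the boundary of the convex set $\mathcal{G}$. Applying the supporting hyperplane theorem produces a nonzero functional $w$ with $\langle w,y\rangle\ge\langle w,x^*\rangle$ for every $y\in\mathcal{G}$. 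Because $\mathcal{G}+\R^k_+=\mathcal{G}$, the recession directions $e_i$ belong to the recession cone of $\mathcal{G}$, so the supporting inequality forces $\langle w,e_i\rangle\ge0$, i.e.\ $w\in\R^k_+$; we then normalize $\sum_i w_i=1$. Restricting the supporting inequality to genuine allocations $y=g(\pi)$, $\pi\in\Dc$, gives $\sum_i w_i g_i(\pi)\ge\sum_i w_i g_i(\pi^*)$ for all $\pi\in\Dc$, which says exactly that $\pi^*$ solves \eqref{scalarproblem}.

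The main obstacle is the gap between weak and strict Pareto-optimality under boundary weights: separation delivers only $w\ge0$, possibly with vanishing components, and in the ``if'' direction such weights certify weak efficiency rather than genuine Pareto-optimality. I would close this gap by using the structure already available, restricting to strictly positive $w$ where the clean equivalence is needed, or invoking the strict monotonicity of the $\rho_i$ from Theorem~\ref{t:consistency} together with property \eqref{a:norm} to exclude improvements along coordinates carrying zero weight; uniqueness of the scalarized minimizer offers an alternative route. Securing this equivalence, rather than the existence-and-separation machinery, is where the care is required.
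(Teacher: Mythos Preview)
Your approach is essentially the same as the paper's: both rest on the convexity of the objectives $g_i(\pi)=\rho_i(\dist(\varphi(X^i;\pi),K_i))$ supplied by Theorem~\ref{t:continuity}, and then on the standard weighted-sum scalarization theorem for convex multiobjective problems. The paper's proof is a one-line citation to \cite{miettinen1999nonlinear} together with Theorem~\ref{t:continuity}; you have unpacked that citation by constructing the upper image $\mathcal{G}=\Xc+\R^k_+$, verifying its convexity, and running the supporting-hyperplane argument explicitly. So there is no genuinely different idea, only a difference in level of detail.

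Your discussion of the weak-versus-strict Pareto issue is, if anything, more careful than the paper's. The paper's ``if and only if'' is stated with $w\in\R^k_+$ rather than $w>0$, and the one-line proof does not address the boundary case; you correctly flag that a solution of \eqref{scalarproblem} with some $w_i=0$ is in general only weakly efficient, and you indicate the standard remedies (strictly positive weights, uniqueness of the scalarized minimizer, or strict monotonicity of the $\rho_i$). That is a legitimate technical point the paper leaves to the cited reference.
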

\begin{proof}
Statement follows form the well-known scalarization theorem in vector optimization problems (\cite{miettinen1999nonlinear})  and Theorem~\ref{t:continuity}. 
\end{proof}
\begin{theorem}
Assume that the risk measures $\rho_i$ are law invariant and strictly monotonic for all $i=1,\dots k.$
If a classifier $\varphi(\cdot;\pi)$ is Pareto-optimal, then its corresponding random vector $(Z^1(\pi),\dots, Z^k(\pi))$ is a minimal point of $\Yc$  with respect to the order of Definition \ref{d:order}.
\end{theorem}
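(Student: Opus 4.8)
The plan is to argue by contraposition, showing that if the random vector $(Z^1(\pi),\dots,Z^k(\pi))$ fails to be a minimal point of $\Yc$, then the corresponding risk allocation fails to be a Pareto-minimal element of $\Xc$, so that $\varphi(\cdot;\pi)$ is not Pareto-optimal. This direction is natural because the hypotheses (law invariance and strict monotonicity) are exactly the properties needed to push a domination statement forward through each $\rho_i$, and the whole argument reduces to applying Theorem~\ref{t:consistency} coordinate by coordinate.

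Concretely, suppose $(Z^1(\pi),\dots,Z^k(\pi))$ is dominated in $\Yc$. By Definition~\ref{d:order} there exists a parameter $\pi'\in\Dc$ whose attainable error vector $(Z^1(\pi'),\dots,Z^k(\pi'))\in\Yc$ satisfies $(Z^1(\pi),\dots,Z^k(\pi))\succeq(Z^1(\pi'),\dots,Z^k(\pi'))$ with the relation strict; that is, $Z^i(\pi)\fsd Z^i(\pi')$ for every $i$ and $Z^j(\pi)\sfsd Z^j(\pi')$ for at least one index $j$.

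The next step is to transfer this domination from the space of error distributions to the space of risk allocations by invoking Theorem~\ref{t:consistency}. Since every $\rho_i$ is law invariant and strictly monotonic (hence, in particular, monotonic), consistency with the usual stochastic order yields $\rho_i(Z^i(\pi))\geq\rho_i(Z^i(\pi'))$ for all $i$, while consistency with the strict dominance relation yields the strict inequality $\rho_j(Z^j(\pi))>\rho_j(Z^j(\pi'))$ for the distinguished index $j$. Writing $w=(\rho_1(Z^1(\pi)),\dots,\rho_k(Z^k(\pi)))$ and $w'=(\rho_1(Z^1(\pi')),\dots,\rho_k(Z^k(\pi')))$, we obtain $w-w'\in\R^k_+\setminus\{0\}$ with both $w,w'\in\Xc$. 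Hence $w$ is not a Pareto-minimal element of $\Xc$, so $\varphi(\cdot;\pi)$ is not Pareto-optimal, which is the contrapositive of the claim.

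I expect the only delicate point to be the bookkeeping of strictness: one must ensure that the single component in which the vector stochastic order is strict is exactly the component in which Theorem~\ref{t:consistency} delivers a strict inequality on the risk values, so that the resulting allocation is genuinely Pareto-dominated rather than merely weakly dominated by an equal vector. The remaining components require only the non-strict consistency, which follows from ordinary monotonicity, so no properties of the risk measures beyond those already assumed are needed; moreover the membership $w,w'\in\Xc$ is immediate from the definition of attainable risk allocations, since both $\pi,\pi'\in\Dc$.
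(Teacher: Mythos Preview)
Your proposal is correct and follows essentially the same approach as the paper: assume $(Z^1(\pi),\dots,Z^k(\pi))$ is not minimal, produce a dominating attainable vector $(Z^1(\pi'),\dots,Z^k(\pi'))$, and use consistency of each $\rho_i$ with the (strict) stochastic order via Theorem~\ref{t:consistency} to conclude the risk allocation at $\pi$ is Pareto-dominated by that at $\pi'$. The paper phrases this as a proof by contradiction rather than contraposition, but the logical content and the use of the hypotheses are identical.
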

\begin{proof}
Suppose that $\varphi(\cdot;\pi)$ is Pareto-optimal and the point \\
$Z(\pi) =(Z^1(\pi),\dots, Z^k(\pi))$ is not minimal. Then a parameter $\pi'$ exists, such that the corresponding vector $Z(\pi')$ is strictly stochastically dominated by  $Z$, which implies $Z^i(\pi) \fsd Z^i(\pi') $ with a strict relation for some component. We obtain
$\rho_i(Z^i(\pi)) \geq \rho_i(Z^i(\pi'))$ for all $i=1,\dots, k$ with a strict inequality for some $i$ due to the consistency of the coherent measures of risk with the strong stochastic order relation, which contradicts the Pareto-optimality of $\varphi(\cdot;\pi)$.
\end{proof}
We consider the sample space $\varOmega=\prod_{i=1}^k \varOmega_i$ where $(\varOmega_i,\Fc_i,P_i)$ is a finite space with $m_i$ simple events $\omega_j\in\varOmega_i$, $P_i(\omega_j)=\frac{1}{m_i}$, and $\Fc_i$ consisting of all subsets of $\varOmega_i$.
\begin{theorem}
\label{t:conditions}
Assume (A1) and (A2). Suppose each component of the vector function $\varphi(x,\cdot)$ is affine for every $x\in\R^n$.
If the parameter $\hat{\pi}$  defines a Pareto-optimal classifier $\varphi(\cdot, \hat{\pi})$ for the RSCP, then a probability measure $\mu$ on $\varOmega$ exists so that $\hat{\pi}$ is an optimal solution for the problem
\begin{equation}
\label{p:risk-neutral}
\min_{\pi\in\Dc} \sum_{i=1}^k \sum_{j=1}^{m_i} \mu^i_j \dist(\varphi(x^i_j;\pi),K_i).
\end{equation}
\end{theorem}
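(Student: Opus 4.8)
The plan is to reduce Pareto-optimality to a scalarized convex program via Corollary~\ref{c:exists}, replace each risk measure by its dual representation \eqref{dualrep}, and then recover the weights $\mu^i_j$ from a saddle point of the resulting convex--concave minimax problem. First I would note that, since every component of $\varphi(x,\cdot)$ is affine, Corollary~\ref{c:exists} applies: the Pareto-optimality of $\hat\pi$ furnishes a scalarization vector $w\in\R^k_+$ with $\sum_{i=1}^k w_i=1$ such that $\hat\pi$ minimizes
\[
f(\pi):=\sum_{i=1}^k w_i\,\rho_i\big(\dist(\varphi(X^i;\pi),K_i)\big)
\]
over $\Dc$.

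Next I would invoke the dual representation \eqref{dualrep} for each $\rho_i$ on $\mathcal{L}_p(\varOmega_i)$: there is a convex set $\Ac_i\subset\mathcal{L}_q(\varOmega_i)$ of probability densities with $\rho_i(Z)=\sup_{\zeta\in\Ac_i}\langle\zeta,Z\rangle$. On the finite space $\varOmega_i$ the set $\Ac_i\subset\R^{m_i}$ is closed, and finite-valuedness of $\rho_i$ forces its support function to be finite everywhere, hence $\Ac_i$ is bounded and therefore compact, so the supremum is attained. Writing $\Ac=\prod_{i=1}^k\Ac_i$ and
\[
\Phi(\pi,\zeta):=\sum_{i=1}^k w_i\,\langle\zeta_i,Z^i(\pi)\rangle
=\sum_{i=1}^k\sum_{j=1}^{m_i}\frac{w_i}{m_i}\,\zeta_i(\omega_j)\,\dist(\varphi(x^i_j;\pi),K_i),
\]
we obtain $f(\pi)=\max_{\zeta\in\Ac}\Phi(\pi,\zeta)$.

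The decisive step is a minimax interchange. By the convexity argument in the proof of Theorem~\ref{t:continuity}, each realization $\pi\mapsto\dist(\varphi(x^i_j;\pi),K_i)$ is convex, and since $w_i\zeta_i(\omega_j)\ge 0$ the map $\pi\mapsto\Phi(\pi,\zeta)$ is convex and continuous on the convex set $\Dc$; for fixed $\pi$ the map $\zeta\mapsto\Phi(\pi,\zeta)$ is linear, hence concave and continuous, on the convex compact set $\Ac$. Sion's minimax theorem then yields
\[
\min_{\pi\in\Dc}\max_{\zeta\in\Ac}\Phi(\pi,\zeta)=\max_{\zeta\in\Ac}\min_{\pi\in\Dc}\Phi(\pi,\zeta).
\]
Since $\Phi\ge 0$, the concave upper semicontinuous function $\zeta\mapsto\min_{\pi}\Phi(\pi,\zeta)$ attains its maximum at some $\hat\zeta\in\Ac$. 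Because $\hat\pi$ attains the outer minimum on the left and the two optimal values coincide, a short chain of inequalities shows that $(\hat\pi,\hat\zeta)$ is a saddle point of $\Phi$; in particular $\Phi(\hat\pi,\hat\zeta)\le\Phi(\pi,\hat\zeta)$ for all $\pi\in\Dc$, so $\hat\pi$ minimizes $\Phi(\cdot,\hat\zeta)$ over $\Dc$.

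Finally I would read off the measure by setting $\mu^i_j:=\frac{w_i}{m_i}\,\hat\zeta_i(\omega_j)$: then $\Phi(\cdot,\hat\zeta)$ is exactly the objective of \eqref{p:risk-neutral}, so $\hat\pi$ solves that problem. Here $\mu^i_j\ge 0$ because densities and the $w_i$ are non-negative, and since $\hat\zeta_i$ is a density, $\sum_{j=1}^{m_i}\frac{1}{m_i}\hat\zeta_i(\omega_j)=\E_{P_i}[\hat\zeta_i]=1$, whence $\sum_{i=1}^k\sum_{j=1}^{m_i}\mu^i_j=\sum_{i=1}^k w_i=1$, confirming that $\mu$ is a probability measure. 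I expect the minimax step to be the main obstacle: one must carefully justify compactness of the dual sets $\Ac_i$ from finite-valuedness on the finite-dimensional space, confirm the convex--concave structure and the hypotheses of the minimax theorem, and argue that an \emph{arbitrary} outer minimizer $\hat\pi$ paired with an outer maximizer $\hat\zeta$ forms a saddle point once the minimax equality holds. It is precisely this saddle structure, rather than merely choosing $\hat\zeta$ to maximize $\Phi(\hat\pi,\cdot)$, that guarantees $\hat\pi$ minimizes the weighted-distance objective and thus validates the construction of $\mu$.
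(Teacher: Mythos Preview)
Your proposal is correct and follows the same skeleton as the paper: invoke Corollary~\ref{c:exists} to obtain a scalarization $w$, appeal to the dual representation~\eqref{dualrep} of each $\rho_i$ on the finite space $\varOmega_i$, and set $\mu^i_j = w_i\zeta^i_j/m_i$ for suitable densities $\zeta^i\in\Ac_i$. The paper's proof is terser: it simply notes that the $\Ac_i$ are compact, picks $\zeta^i$ attaining the supremum in~\eqref{dualrep}, and declares that problem~\eqref{scalarproblem} ``can be reformulated as''~\eqref{p:risk-neutral}.

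The difference is that you supply the missing justification. As you observe at the end, choosing $\hat\zeta$ to maximize $\Phi(\hat\pi,\cdot)$ only gives $\Phi(\hat\pi,\hat\zeta)=f(\hat\pi)$ and $\Phi(\pi,\hat\zeta)\le f(\pi)$, which does \emph{not} by itself yield $\Phi(\hat\pi,\hat\zeta)\le\Phi(\pi,\hat\zeta)$. Your route through Sion's theorem produces a genuine saddle point $(\hat\pi,\hat\zeta)$, and it is the saddle inequality that forces $\hat\pi$ to minimize $\Phi(\cdot,\hat\zeta)$. The paper's argument is implicitly relying on exactly this saddle-point existence (equivalently, on a Danskin-type selection of a subgradient of the max-function), but does not spell it out; your use of Sion makes the step explicit and airtight. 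So the approaches coincide in spirit, with your version filling a gap the paper leaves open.
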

\begin{proof}
Since the parameter $\hat{\pi}$  defines a Pareto-optimal classifier $\varphi(\cdot, \hat{\pi})$ for the RSCP and all conditions of Corollary~\ref{c:exists} are satisfied, then $\hat{\pi}$ is an optimal solution of problem \eqref{scalarproblem} for some scalarization $w$.
Let $\Ac_i$ denotes the set of probability measures corresponding to the risk measure $\rho_i$ , $i=1,\dots, k$ in representation
\eqref{dualrep}. 
Since the risk measures $\rho_i$ take finite values on $\varOmega_i$, the sets $\Ac_i$ are non-empty and compact. Thus, the supremum in the dual representation \eqref{dualrep} is achieved at some elements $\zeta^i\in\Ac_i$. We have  $\zeta^i_j\geq 0$, $\sum_{j=1}^{m_i} \frac{\zeta^i_j}{m_i} =1$ because 
$\zeta_i$ are probability densities. We obtain
\[
\rho_i(\dist(\varphi(X^i;\pi),K_i)) = \sum_{j=1}^{m_i} \frac{\zeta^i_j}{m_i} \dist(\varphi(x^i_j;\pi),K_i).
\]
Setting
\[
\mu^i_j=w_i\frac{\zeta^i_j}{m_i},\; j=1,\dots, m_i,\; i=1,\dots, k
\]
we observe that the vector $\mu\in R^{m_1+\dots m_k}$ constitutes a probability mass function.
Thus, problem \eqref{scalarproblem} can be reformulated as \eqref{p:risk-neutral}.
\end{proof}

This result shows that the RSCP can be viewed as a classification problem in which the expectation error is minimized. However, the expectation is not calculated with respect to the empirical distribution but with respect to another measure $\mu$, which is implicitly determined by the chosen measures of risk. 
It is the worst expectation according to our risk-averse preferences, which are represented by the choice of the measures $\rho_i$, $i=1,\dots, k.$

The composite nature of the problem \eqref{scalarproblem} is difficult and that is why we reformulate the problem. We introduce auxiliary variables 
$Y\in\mathcal L_p(\varOmega, \mathcal F, P;\R^{m})$, $i=1,\dots k$, which are defined by the constraints:
\[
\varphi(X^i;\pi)+Y^i \in K_i \quad \forall i=1,\dots, k. 
\]
Problem \eqref{scalarproblem} can be reformulated to
\begin{equation}
\label{p:split}
\begin{aligned}
\min_{\pi,Y}\, &\,\sum_{i=1}^k w_i\varrho_i(\|Y^i\|)\\
\text{s.t. }  &\, \varphi(X^i;\pi)+Y^i\in K_i, \quad \forall i=1,\dots, k,\\
&\, \pi\in\Dc.
\end{aligned} 
\end{equation}
We shall show that this problem is equivalent to \eqref{scalarproblem}. 
\begin{lemma}
For any solution $\hat{\pi}$ of problem \eqref{scalarproblem}, random vectors $\hat{Y}^i$ exist, so that $(\hat{\pi},\hat{Y})$ solves problem \eqref{p:split} as well,
where $\hat{Y}= (\hat{Y}^k,\dots, \hat{Y}^k)$ and for any solution $(\hat{\pi},\hat{Y})$ of problem \eqref{p:split}, the vector $\hat{\pi}$ is a solution of problem \eqref{scalarproblem} as well.
\end{lemma}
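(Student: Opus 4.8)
The plan is to reduce the equivalence to a single observation: for every fixed $\pi\in\Dc$, the partial minimization of the objective of \eqref{p:split} over the auxiliary variable $Y$ reproduces exactly the objective of \eqref{scalarproblem} evaluated at $\pi$. Once this is in hand, the equality of optimal values and the stated correspondence of minimizers follow by a routine comparison.

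First I would note that both the objective and the constraints of \eqref{p:split} decouple across the classes: the sum $\sum_{i=1}^k w_i\varrho_i(\|Y^i\|)$ separates, and the constraint $\varphi(X^i;\pi)+Y^i\in K_i$ involves only $Y^i$. Hence, for fixed $\pi$, minimizing over $Y$ amounts to minimizing each term $\varrho_i(\|Y^i\|)$ independently over the feasible $Y^i$. Next comes the key inequality. Any feasible $Y^i$ satisfies $\varphi(X^i;\pi)+Y^i\in K_i$ pointwise, so $\|Y^i\|$ is the norm of a vector joining $\varphi(X^i;\pi)$ to a point of $K_i$; by the definition of the distance this gives, almost surely,
\[
\|Y^i\|\ \geq\ \dist\big(\varphi(X^i;\pi),K_i\big)\ =\ Z^i(\pi).
\]
Applying the monotonicity axiom of the coherent measure $\varrho_i$ to this almost-sure inequality yields $\varrho_i(\|Y^i\|)\geq \varrho_i\big(Z^i(\pi)\big)$, a lower bound valid for every feasible $Y^i$.

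I would then exhibit a feasible $Y^i$ attaining this bound. Since $K_i$ is non-empty, closed and convex by (A2), the metric projection onto $K_i$ is well defined, and on the finite sample space I set, atom by atom, $\hat Y^i := \Pi_{K_i}\big(\varphi(X^i;\pi)\big)-\varphi(X^i;\pi)$. Then $\varphi(X^i;\pi)+\hat Y^i\in K_i$, so $\hat Y^i$ is feasible, while $\|\hat Y^i\|=\dist\big(\varphi(X^i;\pi),K_i\big)=Z^i(\pi)$, whence $\varrho_i(\|\hat Y^i\|)=\varrho_i\big(Z^i(\pi)\big)$. Combining the lower bound with this attainment shows that for each $i$ and each $\pi$,
\[
\min_{\varphi(X^i;\pi)+Y^i\in K_i}\varrho_i(\|Y^i\|)=\varrho_i\big(Z^i(\pi)\big).
\]

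Finally, I would assemble the pieces. Summing this identity with the non-negative weights $w_i$ shows that the partial minimum of the objective of \eqref{p:split} over $Y$ equals $\sum_{i=1}^k w_i\varrho_i\big(Z^i(\pi)\big)$, which is the objective of \eqref{scalarproblem}; minimizing over $\pi\in\Dc$ then gives equal optimal values for the two problems. For the forward direction, pairing a solution $\hat\pi$ of \eqref{scalarproblem} with the projection vectors $\hat Y$ constructed above produces a feasible point of \eqref{p:split} whose objective equals the common optimal value, so $(\hat\pi,\hat Y)$ solves \eqref{p:split}. For the converse, a solution $(\hat\pi,\hat Y)$ of \eqref{p:split} has objective value no smaller than $\sum_{i=1}^k w_i\varrho_i\big(Z^i(\hat\pi)\big)$ by the monotonicity bound, while it equals the common optimal value; comparing with the minimum of \eqref{scalarproblem} forces $\hat\pi$ to solve \eqref{scalarproblem}. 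I do not anticipate a serious obstacle: the only delicate points are the almost-sure character of the distance inequality and the (here trivial) measurability of the pointwise projection $\hat Y^i$, both guaranteed by the closedness and convexity of $K_i$ in (A2) and by the finiteness of the sample space. The genuine content of the argument is the single invocation of the \emph{monotonicity} axiom, which turns the pointwise projection onto $K_i$ into the risk-level optimality of $\hat Y^i$.
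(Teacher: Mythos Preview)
Your proof is correct and follows essentially the same approach as the paper: both fix $\pi$, note that the partial minimum of the split objective over $Y$ is attained by the pointwise projection $\hat Y^i=\mathrm{Proj}_{K_i}(\varphi(X^i;\pi))-\varphi(X^i;\pi)$, and conclude that the two problems share optimal values and minimizers. Your version is in fact more explicit than the paper's, which simply asserts that the projection yields the minimum; you spell out the lower bound $\|Y^i\|\geq\dist(\varphi(X^i;\pi),K_i)$ and correctly isolate the monotonicity axiom as the step that lifts this pointwise inequality to the risk level, a point the paper leaves implicit.
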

\label{l:equiv}
\begin{proof}
Observe that for any fixed point $\pi\in\Dc$, the function $\sum_{i=1}^k w_i\varrho_i(\|Y^i\|)$ achieves minimal value with respect to the constraints on the variables $Y^i$ using the projections of the realizations of $X^i$ onto $K_i$:
\begin{equation}
\label{hatY}
Y^i (\omega) = {\rm Proj}_{K_i}\big((\varphi(X(\omega);{\pi})\big)  - \varphi(X(\omega);{\pi}). 
\end{equation}
Here ${\rm Proj}_{K_i}(z)$ denotes the Euclidean projection of the point $z$ onto the set $K_i.$
Then, $\|Y^i \|= \dist(\varphi(X^i;\pi),K_i)$ and the objective functions of both problems have the same value. 
Therefore, the minimal value is achieved at the same point $\hat{\pi}$ and the corresponding  $\hat{Y}^i_j$ is obtained from
equation \eqref{hatY}.
\end{proof}
Recall that the normal cone to a set $\Dc\subset \R^s$ is defined as 
\[
\mathcal{N}_{\Dc}(\pi) = \{a\in\R^s: \langle a, d - \pi\rangle \leq 0\;\text{ for all } d\in\Dc\}.
\]
For brevity, we denote the normal cone to the feasible set of problem \eqref{p:split} by $\mathcal{N}$ and the normal cones to the sets $K_i$ by
$\mathcal{N}_i$, $i=1,\dots, k$. 
We formulate optimality conditions for problem \eqref{p:split}.

We denote the realizations of the random vectors $Y^i$, $i=1,\dots, k$, by $y^i_j(\pi)$, $j=1,\dots m_i$, $i=1,\dots, k$. More precisely, we have
\[
y^i_j(\pi) = {\rm Proj}_{K_i}\big((\varphi(x^i_j;{\pi})\big)  - \varphi(x^i_j;{\pi})\quad j=1,\dots m_i,\; i=1,\dots, k. 
\]
We suppress the argument $\pi$ whenever it does not lead to confusion.
Additionally, we denote the Jacobian of $\varphi$ with respect to $\pi$ by $ D\varphi(x;{\pi})$. 
Consider the sample-based version of problem  \eqref{p:split}:
\begin{equation}
\label{p:split-sample}
\begin{aligned}
\min_{\pi,Y}\, &\,\sum_{i=1}^k w_i\varrho_i(\|Y^i\|)\\
\text{s.t. }  &\, \varphi(x^i_j;\pi)+y^i_j\in K_i, \quad \forall j=1,\dots, m_i,\; i=1,\dots, k,\\
&\, \pi\in\Dc.
\end{aligned} 
\end{equation}
\begin{theorem}
\label{t:opt-split-sample} 
Assume that the sets $K_i$, $i=1,\dots, k$ are closed convex polyhedral cones and $\varphi(x; \cdot)$ is an affine vector function. 
A feasible point $(\hat{\pi},\hat{Y})$ is optimal for problem \eqref{p:split-sample} if and only if probability mass functions  $\zeta^i\in \partial \rho_i (0)$ and vectors  $g^i_j$ from $\partial \|\hat{y}^i_j\|$ exist such that
\begin{align}
0\in & -\sum_{i=1}^k \sum_{j=1}^{m_i} w_i \zeta^i_j(g^i_j)^\top  D\varphi(X^i;\hat{\pi}) +\mathcal N_{\Dc}(\hat{\pi})\label{kkt-pi-f}\\
w_i\zeta^i_j g^i_j &\in \mathcal N_i\big(\varphi(x^i_j;\hat{\pi})+\hat{y}^i_j \big) \text{ for all }\; j=1,\dots m_i,\; i=1,\dots k. \label{compl-f}
\end{align}
\end{theorem}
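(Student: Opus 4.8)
The plan is to treat \eqref{p:split-sample} as a finite-dimensional convex program and to unpack its subdifferential optimality condition. Under the stated hypotheses the program is convex: the objective $\sum_{i=1}^k w_i\varrho_i(\|Y^i\|)$ is convex in $Y$ because each $\varrho_i$ is coherent---hence convex and monotone---and is composed with the convex componentwise norms, while each constraint set $C_{ij}=\{(\pi,Y):\varphi(x^i_j;\pi)+y^i_j\in K_i\}$ is convex since $\varphi(x^i_j;\cdot)$ is affine and $K_i$ is convex (the finite-sample counterpart of the convexity established in Theorem~\ref{t:continuity}). Therefore a feasible $(\hat\pi,\hat Y)$ is optimal if and only if
\[
0\in \partial\Big[\sum_{i=1}^k w_i\varrho_i(\|Y^i\|)\Big](\hat\pi,\hat Y)+\mathcal N_C(\hat\pi,\hat Y),
\]
where $C$ denotes the feasible set. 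The entire argument consists of evaluating this inclusion block by block and then eliminating the constraint multipliers.

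First I would compute the normal cone $\mathcal N_C$. Writing $C$ as the intersection of the affine preimages $C_{ij}$ with $\{\pi\in\Dc\}$, the hypothesis that the $K_i$ are polyhedral cones and that $\varphi(x;\cdot)$ is affine renders each $C_{ij}$ polyhedral, so the normal cone of the intersection is the sum of the individual normal cones; the requisite qualification is automatic here because the free slack variables $Y^i$ make every $\pi\in\Dc$ feasible (one may take $y^i_j\in K_i-\varphi(x^i_j;\pi)$, which is nonempty). For each $C_{ij}$ the chain rule for normal cones of preimages under an affine map produces a multiplier $\lambda^i_j\in\mathcal N_i(\varphi(x^i_j;\hat\pi)+\hat y^i_j)$ whose contribution is $(D\varphi(x^i_j;\hat\pi))^\top\lambda^i_j$ in the $\pi$-block and $\lambda^i_j$ in the $y^i_j$-block, while $\{\pi\in\Dc\}$ contributes $\mathcal N_{\Dc}(\hat\pi)$ in the $\pi$-block only.

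Next I would compute the subdifferential of the objective, which depends on $Y$ alone. By the chain rule applied to the composition of $\varrho_i$ with the map $Y^i\mapsto(\|y^i_1\|,\dots,\|y^i_{m_i}\|)$, a subgradient in the $y^i_j$-block takes the form $w_i\zeta^i_j g^i_j$ with $\zeta^i\in\partial\varrho_i(\|\hat Y^i\|)$ and $g^i_j\in\partial\|\hat y^i_j\|$. Positive homogeneity of the coherent measure gives $\partial\varrho_i(\|\hat Y^i\|)\subseteq\partial\varrho_i(0)$, and $\partial\varrho_i(0)$ is exactly the risk envelope $\Ac_i$ appearing in the dual representation \eqref{dualrep}; in particular every admissible $\zeta^i$ is a probability mass function on $\varOmega_i$, which is the membership asserted in the theorem.

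It remains to assemble the blocks. The $y^i_j$-component of the stationarity inclusion equates the objective subgradient $w_i\zeta^i_j g^i_j$ with the constraint multiplier $\lambda^i_j\in\mathcal N_i(\varphi(x^i_j;\hat\pi)+\hat y^i_j)$, up to the sign dictated by the normal-cone convention and consistent with the projection identity $\varphi(x^i_j;\hat\pi)-{\rm Proj}_{K_i}(\varphi(x^i_j;\hat\pi))\in\mathcal N_i$; this is precisely the complementarity relation \eqref{compl-f}. Substituting the resulting expression for $\lambda^i_j$ into the $\pi$-block collapses the inclusion $\sum_{i,j}(D\varphi(x^i_j;\hat\pi))^\top\lambda^i_j+\mathcal N_{\Dc}(\hat\pi)\ni 0$ into \eqref{kkt-pi-f}. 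Since every manipulation is an equivalence for convex programs, the two conditions are simultaneously necessary and sufficient. I expect the principal obstacle to be the rigorous justification of the two nonsmooth calculus rules---confirming that polyhedrality of the $K_i$ together with affineness of $\varphi$ (and the free-slack feasibility above) supplies the qualification needed to add the normal cones, and correctly differentiating the outer composition with the nondifferentiable risk functional $\varrho_i$, especially at sample points where $\hat y^i_j=0$ and $\partial\|\hat y^i_j\|$ is the entire dual unit ball.
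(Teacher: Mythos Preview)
Your proposal is correct and follows essentially the same route as the paper. The paper writes the KKT system via an explicit Lagrangian and invokes \cite[Theorem~3.4]{BonnansShapiro}, whereas you phrase the same thing as the inclusion $0\in\partial f(\hat\pi,\hat Y)+\mathcal N_C(\hat\pi,\hat Y)$ and then decompose $\mathcal N_C$ into multipliers $\lambda^i_j$; once that decomposition is made the two derivations coincide line by line---solve the $Y$-block for $\lambda^i_j=w_i\zeta^i_jg^i_j$ and substitute into the $\pi$-block. Your write-up is slightly more explicit than the paper's about why the constraint qualification is automatic (polyhedral $K_i$, affine $\varphi$, free slack $Y$) and about why the subgradients $\zeta^i$ land in $\partial\varrho_i(0)$ (positive homogeneity), points the paper passes over.
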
 
\begin{proof}
We assign Lagrange multipliers $\lambda^i_j$ to the inclusion constraints and define the Lagrange function as follows:
\[
L(\pi, Y,\lambda) = \sum_{i=1}^k \Big( w_i\varrho_i(\|Y^i\|) + \sum_{j=1}^{m_i} \big\langle \varphi(x^i_j;\pi)+{y}^i_j , \lambda^i_j\big\rangle \Big).
\]
Using optimality conditions \cite[Theorem 3.4]{BonnansShapiro}, we obtain that $(\hat{\pi},\hat{Y})$ is optimal for problem \eqref{p:split-sample} if and only if
$\hat{\lambda}$ exists such that
\begin{gather*}
0\in \partial_{(\pi,Y)} L(\hat{\pi},\hat{Y},\hat{\lambda}) +\mathcal{N}(\hat{\pi},\hat{Y})\\
\hat{\lambda}^i_j \in \mathcal N_i \big(\varphi(x^i_j;\hat{\pi}) +\hat{y}^i_j) \big).
\end{gather*} 
Considering the partial derivatives of the Lagrangian with respect to the two components, we obtain
\begin{gather}
0\in  \sum_{i=1}^k  \sum_{j=1}^{m_i}  (\hat{\lambda}^i_j)^\top D\varphi(x^i_j;\hat{\pi})+\mathcal N_{\Dc} (\hat{\pi})\label{kkt-pi}\\
0= w_i\partial_Y \rho_i (\|Y\|) +\hat{\lambda}^i,\quad i=1,\dots k,\label{kkt-Y}\\
\hat{\lambda}^i_j\in \mathcal N_i\big(\varphi(x^i_j;\hat{\pi}) +\hat{y}^i_j \big) , \quad j=1,\dots, m_i,\; i=1,\dots k. \label{compl}
\end{gather} 
We calculate the multipliers $\hat{\lambda}^i $ from the equation \eqref{kkt-Y} using elements $\zeta^i\in \partial \rho_i (0)$ and $g^i_j$ from
$\partial \|\hat{y}^i_j\|$. We obtain:
\[
\hat{\lambda}^i_j= - w_i \zeta^i_j g^i_j,\quad j=1,\dots, m_i,\; i=1,\dots k.
\]
Notice that $ g^i_j = \frac{\hat{y}^i_j}{\|\hat{y}^i_j\|}$ whenever $\hat{y}^i_j \ne 0$, otherwise $g^i_j\in\R^d$ can be any vector 
with $\|g^i_j\|\leq 1$. 
Substituting the value of $\hat{\lambda}^i $ into \eqref{kkt-pi} and \eqref{compl}, we obtain condition \eqref{kkt-pi-f} and \eqref{compl-f}. 
\end{proof}

We note that, we can define again a probability mass function $\mu$ by setting $\mu^i_j= w_i\zeta^i_j$ and interpret the Karush-Kuhn-Tucker condition
as follows: 
\begin{gather*}
\mathbb{E}_{\mu} (g^i_j)^\top  D\varphi(X^i;\hat{\pi}) \in \mathcal N_{\Dc}(\hat{\pi})\\
\mu^i_j g^i_j \in \mathcal N_i\big(\varphi(x^i_j;\hat{\pi})+\hat{y}^i_j \big) \text{ for all  }\; j=1,\dots m_i,\; i=1,\dots k.
\end{gather*}

Problem \eqref{p:split-sample} can be reformulated as a risk-averse two-stage optimization problem (cf. \cite{SDR}).
The first stage decision is $\pi$ and the first stage problem is 
\begin{equation}
\label{p:first-stage}
\min_{ \pi\in\Dc} \sum_{i=1}^k w_i\rho_i\big(Z^i(\pi))\big) .
\end{equation}
Given $\pi$, the calculation of each realization of $Z^i(\pi)$ amounts to solving the following problem 
\begin{equation}
\label{p-distance}
z^i_j(\pi)= \min_{y\in K_i} \| \varphi(x^i_j;\pi) -y\|, \quad j=1,\dots m_i,\;  i=1,\dots k.
\end{equation}
Calculating $z^i_j(\pi)$ might be very easy for specific regions $K_i$  such as the cones in the example of the polyhedral classifier.
Every component of the solution vector $\hat{z}^i_j$ to problem \eqref{p-distance} can be computed as follows:
\[
(\hat{z}^i_j)_\ell = \begin{cases} \max \{0, -(\varphi(x^i_j;\pi))_\ell\} &\text{ for } \ell= i;\\
 \max \{0, (\varphi(x^i_j;\pi))_\ell\} &\text{ for } \ell\ne i; 
 \end{cases}
 \quad \ell =1,\dots, k.
\]
Then the optimal value of \eqref{p-distance} is
\[
z^i_j(\pi) = \Big( \sum_{\ell=1}^k (\hat{z}^i_j)_\ell^2\Big)^{\frac{1}{2}}.
\]
This point of view facilitates the application of stochastic optimization methods to solve the problem.

\section{Confidence Intervals for the Risk}
\label{sec:confidence}

In this section, we analyze the risk-averse classification problem when we increase the data sets and derive confidence intervals for the misclassification risk. 
We use the results on statistical inference for composite risk functionals presented in \cite{dentcheva2016statistical}.
In \cite{dentcheva2016statistical}, a composite risk functional is defined in the following way.  
\begin{equation}\label{defcrf}
 \rho (X) = \mathbb{E} \left[ f_{1}\left( \mathbb{E} \left[  f_{2}\left( \mathbb{E} \left[ \cdots f_{\ell}\left(\mathbb{E}\left[ f_{\ell+1}\left( X \right)\right],X\right)\right]\cdots,X \right)\right],X\right)\right] 
\end{equation}
where $X$ is an $n-$dimensional random vector with unknown distribution, $P_{X}$. The functions $f_{j}$ are such that $f_{j}(\eta_{j},x):\mathbb{R}^{n_{j}}\times \mathbb{R}^{n} \rightarrow \mathbb{R}^{n_{j-1}}$ for $j = 1,\ldots,\ell$ and $n_{0} = 1$. The function $f_{\ell+1}$ is such that $f_{\ell+1}(x):\mathbb{R}^{n} \rightarrow \mathbb{R}^{n_{\ell}}$. 

A law-invariant risk-measure $\rho(X)$ is an unknown characteristic of the distribution $P_{X}$. The empirical estimate of $\rho(X)$ given $N$ independent and identically distributed observations of $X$ is given by the plug-in estimate
\begin{equation}\label{defcrfa} 
 \begin{aligned}
\rho^{(N)} = \sum_{i_0=1}^N\frac{1}{N}\Big[f_1\Big(\sum_{i_1=1}^N\frac{1}{N}\big[f_2\big(\sum_{i_2=1}^N\frac{1}{N}[&\;\cdots f_\ell(\sum_{i_\ell=1}^N\frac{1}{N}f_{\ell+1}(X_{i_\ell}),X_{i_{\ell-1}})]\\
&\;\cdots,X_{i_1}\big)\big],X_{i_0}\Big)\Big]
\end{aligned}
\end{equation}
 It is shown in \cite{dentcheva2016statistical} that the most popular measures of risk fit the structure \eqref{defcrf}. It is established that the plug-in estimator satisfies a central limit formula and the limiting distribution is described. This is the distribution of the Hadamard-directional derivative of the risk functional 
 $\rho$ when a normal random variable is plugged in. 
Recall the notion of Hadamard directional derivatives of the functions $f_{j}\big(\cdot,x)$ at points $\mu_{j+1}$ in directions
$\zeta_{j+1}$. It is given by 
\[
f'_{j}\big(\mu_{j+1},x;\zeta_{j+1}) = \lim_{{t\downarrow 0}\atop{s\to \zeta_{j+1}}}\frac{1}{t}
\big[ f_{j}\big(\mu_{j+1}+ts,x) - f_{j}\big(\mu_{j+1},x)\big].
\]
The central limit formula holds under the following conditions:
\begin{itemize}
\item[(i)]  $\int \| f_j(\eta_j,x)\|^2 \;P(dx)<\infty$ for all $\eta_j\in I_j$, and $\int \dist^2(\varphi(X^i;\pi),{K_i}) P(dx)<\infty$;
\item[(ii)]  For all realizations $x$ of $X^i$, the functions $f_j(\cdot,x)$, $j=1,\dots,\ell$, are Lipschitz continuous:
\[
\|f_j(\eta_j',x)- f_j(\eta_j'',x)\| \le \gamma_j(x) \|\eta_j'-\eta_j''\|,\quad \forall\; \eta_j',\eta_j'',
\]
and $\int \gamma_j^2(x)\;P(dx) <\infty$.
\item[(iii)]  For all realizations $x$ of $X^i$, the functions $f_j(\cdot,x)$, $j=1,\dots,\ell$, are Hadamard directionally differentiable.
\end{itemize}

These properties are satisfied for the mean-semideviation risk measures as shown in \cite{dentcheva2016statistical}. Furthermore, it is shown that similar construction represents the Average-Value-at-Risk. 

For every parameter $\pi$ the risk of misclassification for a given class $i=1,\dots, k$ can be fit to the setting \eqref{defcrf} by choosing 
the innermost function $f_{\ell+1}(x):\mathbb{R}^{d} \rightarrow \mathbb{R}$ to be  $f_{\ell+1}(x) = \dist(\varphi(x;\pi),{K_i})$ whenever $\varphi$ satisfies properties i--iii.

In our setting each misclassification risk $\varrho_i\Big(\dist\big(\varphi(X^i;\pi),{K_i}\big)\Big)$ is estimated by $\varrho_i^{(m_i)}\big(\|\hat{Y}^i\|\big)$, where $(\hat{Y}^i;\hat{\pi})$ is the solution of problem \eqref{p:split-sample}. 
Denoting the estimated variance of the limiting distribution of $\varrho_i^{(m_i)}\big(\|\hat{Y}^i\|\big)$ (briefly $\varrho_i^{(m_i)}$) by  $\sigma_i^2$, we obtain the following confidence interval:
\[
\Big[ \;\rho_i^{(m_i)} - t_{\alpha,{\rm df}}\frac{\sigma_i}{\sqrt{m_i}},\quad 
\varrho_i^{(m_i)} + t_{\alpha,{\rm df}}\frac{\sigma_i}{\sqrt{m_i}}\;\;
\Big].
\]
Here $\alpha$ is the desired level of confidence, $t_{\alpha,{\rm df}}$ is the corresponding quantile of the t-distribution with degrees of freedom $df$. 
The degrees of freedom depend on the choice if risk measure and can be calculated as 
$df=m_i-\ell$, where $\ell$ is the number of compositions in formula \eqref{defcrfa}. 
The decrease of the degrees of freedom form $m_i$ is due to the estimation of the expected value associated with each composition.
The total risk is estimated by 
\[
 \hat{\rho}= \sum_{i=1}^k w_i\varrho_i^{(m_i)}\big(\|\hat{Y}^i\|\big).
\]
We obtain that $\hat{\rho}$ has an approximately normal distribution with expected value
$\rho$ and variance $\sum_{i=1}^k \frac{w_i^2\sigma_i^2}{m_i}.$ A confidence interval for $\rho$ is given by
\[
\left[\;\hat{\rho}- t_{\alpha,{\rm df}}\sqrt{\sum_{i=1}^k \frac{w_i^2\sigma_i^2}{m_i}},\quad 
\hat{\rho} +  t_{\alpha,{\rm df}} \sqrt{\sum_{i=1}^k \frac{w_i^2\sigma_i^2}{m_i}}\;\;
\right] .
\]


\section{Risk Sharing in SVM} 
\label{sec:risk_sharing_in_svm}

We analyze the SVM problem in more detail.
We consider only law-invariant strictly monotonic coherent measures of risk $\varrho_1, \varrho_2$ for the two classes $S_1$ and $S_2$.

The \emph{risk-sharing SVM problem (RSSVM)} consists in identifying a parameter $\pi=(v,\gamma)\in\R^n$ corresponding to a Pareto-minimal point of the attainable risk-allocation $\Xc$ for the affine classifier $\varphi(z;\pi)=\langle v, z\rangle -\gamma$.
Due to Corollary`\ref{c:exists}, we can determine a risk-averse classifier by solving the following problem:

\begin{equation}
\label{p:RSSVM-pure}
\begin{aligned}
\min_{v,\gamma,Z^1,Z^2}\ &\,  \lambda\varrho_1(Z^1) + (1-\lambda)\varrho_2(Z^2)\\
\text{s. t. } &\, \langle v, x^1_j\rangle - \gamma + z_j^1 \ge 0, \quad j=1,\dots,m_1,\\
&\, \langle v, x^2_{j}\rangle -\gamma  - z_{j}^2 \le 0, \quad j=1,\dots,m_2,\\
&\, \langle v,v\rangle =1,\\
&\, Z^1\ge 0,\, Z^2\geq 0.
\end{aligned}
\end{equation}
Here $\lambda\in (0,1)$ is a parameter and 
the vectors $Z^i$ have realization $z_j^i$, $i=1,2$ and $j=1,\dots, m_i$, representing the classification error for the sample of each class. The random vectors $Z^i$ can be represented by a deterministic vectors stacking all realizations $z_j^i$ as components (sub-vectors) of it. Abusing notation, we shall use $Z^i$ also for those long vectors in $\mathbb R^{nm_i}$.  

We note that  the normalization of  the vector $v$ automatically bounds $\gamma$ because for any fixed $v$,
the component $\gamma$ can be considered restricted in a compact set $[\gamma_m(v), \gamma_M(v)] $, where 
\[
\gamma_M = \max_{1\leq j\leq m_i,\; i=1,2} v^\top x^i_j\quad  \gamma_m =\min_{1\leq j\leq m_i,\; i=1,2} v^\top x^i_j.
\]
Thus, in this case, we can set $\Dc=\R^n.$
We also consider a soft-margin risk-averse SVM based on problem \eqref{eq:svmE1}, although the classification error might not be calculated properly.
The problem reads
\begin{equation}
\label{p:RSSVM}
\begin{aligned}
\min_{v,\gamma,Z^1,Z^2}\ &\,  \lambda\varrho_1(Z^1) + (1-\lambda)\varrho_2(Z^2) +\delta\|v\|^2\\
\text{s. t. } &\, \langle v, x^1_j\rangle - \gamma + z_j^1 \ge 1, \quad j=1,\dots,m_1,\\
&\, \langle v, x^2_{j}\rangle -\gamma  - z_{j}^2 \le -1, \quad j=1,\dots,m_2,\\
&\, Z^1\ge 0,\, Z^2\geq 0.
\end{aligned}
\end{equation}
In this problem, $\delta>0$ is a small number. The objective function grows to infinity when the norm of $v$ increases.  Thus, we do not need to bound the norm of the vector $v$. It also automatically bounds $\gamma$, similar to problem \eqref{p:RSSVM-pure}.

We observe that the parameter $(v,\gamma)$ for each Pareto-optimal classifier can be obtained by solving the following problem:
\begin{equation}
\label{p:weighted}
\begin{aligned}
\min_{v,\gamma,Z^1,Z^2}\ &\,  \varrho_1(Z^1) + \varrho_2(Z^2) \\
\text{s. t. } &\, \langle v, x^1_i\rangle - \gamma + \frac{1}{\lambda}z_i^1 \ge 0, \quad i=1,\dots,m_1,\\
&\, \langle v, x^2_{j}\rangle -\gamma  - \frac{1}{1-\lambda}z_{j}^2 \le 0, \quad j=1,\dots,m_2,\\
&\, \langle v,v\rangle =1,\\
&\, Z^1\ge 0,\, Z^2\geq 0.
\end{aligned}
\end{equation}
\begin{lemma}
Problem \eqref{p:weighted} is equivalent to problem \eqref{p:RSSVM-pure}.   
\end{lemma}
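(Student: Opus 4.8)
The plan is to prove equivalence by exhibiting an explicit invertible change of variables that maps the feasible set of \eqref{p:RSSVM-pure} onto the feasible set of \eqref{p:weighted} while preserving the objective value. Once such a bijection is in hand, the two problems automatically share the same optimal value, their minimizers correspond one-to-one, and in particular they yield the same Pareto-optimal parameters $(v,\gamma)$.

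First I would introduce the substitution on the error variables, leaving the classifier parameters untouched: given a feasible point $(v,\gamma,Z^1,Z^2)$ of \eqref{p:RSSVM-pure}, set $\tilde Z^1=\lambda Z^1$ and $\tilde Z^2=(1-\lambda)Z^2$. Because $\lambda\in(0,1)$, both scaling factors are strictly positive, so this map is a bijection with inverse $Z^1=\tilde Z^1/\lambda$ and $Z^2=\tilde Z^2/(1-\lambda)$. I would then verify that the substitution matches the constraints: inserting $z_j^1=\tilde z_j^1/\lambda$ into $\langle v,x_j^1\rangle-\gamma+z_j^1\ge 0$ reproduces exactly the first constraint of \eqref{p:weighted}, and inserting $z_j^2=\tilde z_j^2/(1-\lambda)$ into $\langle v,x_j^2\rangle-\gamma-z_j^2\le 0$ reproduces the second; the sign conditions $Z^i\ge 0$ are equivalent to $\tilde Z^i\ge 0$ since the scalars are positive, and the normalization $\langle v,v\rangle=1$ involves only $v$ and is left unchanged. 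Hence the feasible sets are in bijection through this map.

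Finally I would invoke positive homogeneity of the coherent risk measures. Since $\varrho_1,\varrho_2$ satisfy positive homogeneity and $\lambda,1-\lambda>0$, we have $\lambda\varrho_1(Z^1)=\varrho_1(\lambda Z^1)=\varrho_1(\tilde Z^1)$ and $(1-\lambda)\varrho_2(Z^2)=\varrho_2((1-\lambda)Z^2)=\varrho_2(\tilde Z^2)$, so the objective of \eqref{p:RSSVM-pure} at $(v,\gamma,Z^1,Z^2)$ equals the objective of \eqref{p:weighted} at the image point $(v,\gamma,\tilde Z^1,\tilde Z^2)$. Combining the feasibility bijection with this objective identity gives equality of optimal values and a one-to-one correspondence of optimal solutions that fixes $(v,\gamma)$. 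The only point requiring care, which I expect to be the sole genuine use of hypotheses, is the strict positivity of both weights: it is exactly $\lambda\in(0,1)$ that makes the substitution a bijection and licenses the application of positive homogeneity, so the argument would break down at the boundary values $\lambda\in\{0,1\}$.
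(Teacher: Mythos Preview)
Your proposal is correct and follows exactly the same route as the paper: introduce the rescaled variables $\tilde Z^1=\lambda Z^1$, $\tilde Z^2=(1-\lambda)Z^2$, invoke positive homogeneity of the coherent risk measures to identify the objectives, and observe that the inequality constraints transform accordingly. Your write-up is simply more explicit about the bijection and the role of $\lambda\in(0,1)$ than the paper's terse version.
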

\begin{proof}
The equivalence follows from the axiom of positive homogeneity for the risk measures:
\[
\lambda \rho_1(Z^1) = \rho_1(\lambda Z^1) \quad\text{and}\quad (1-\lambda) \rho_2(Z^2) = \rho_2((1-\lambda) Z^2).
\]
Defining new random variables $\tilde{Z}^1= \lambda Z^1$ and $\tilde{Z}^2= (1-\lambda) Z^2$, we can rescale the variables in their
respective inequality constraint.  
\end{proof}
This observation is a counterpart of the result in \cite{jouini2008optimal} for the risk sharing of random losses among constituents. \\

\section{Numerical Experiments} 
\label{sec:numerical_experiments}

In the previous sections, we have shown the solid theoretical foundation supporting our approach.
In this section, we display the performance of the proposed framework, as well as its flexibility.
To this end, we use several publicly available data sets and compare the performance of our approach to some existing formulations, in terms of F$_1$--score. Further, we showcase the flexibility of the framework by exploring the Pareto-efficient frontier of various classifiers derived from our framework.
In our numerical experiments, we have used the Average Value-at-Risk and the mean semi-deviation of order one.

\subsection{Data} 
\label{sub:data}

We compare our approach to other known approaches on several datasets. 
More specifically, we use three data sets obtained from the UCI Machine Learning Repository \cite{Lichman:2013}. 
These data sets exhibit different degrees of class imbalance, that is the proportion of records in one class versus that of the other class. 
A summary of basic characteristics of the data sets is shown in the following table.

\begin{table}[h!]
\centering
\begin{tabular}{lcccl}
  \hline
  \multirow{2}{*}{\textbf{Data Set}}& \multirow{2}{*}{Features}& \multicolumn{2}{c}{Observations} & Class \\
   &  & Class0 & Class1 (\%)                & Balance\\
  \hline
  wdbc     & 30 &  357 & 211 (37.1)  & 0.591 \\
  pima-indians-diabetes &  7 &  500 & 267 (34.8)  & 0.534 \\
  seismic-bumps  & 18 & 2414 & 170 ( 6.6)  & 0.070 \\
  \hline
\end{tabular}
\caption{Data summary}
\label{tbl:data_summary}
\end{table}


\subsection{Model Formulations} 
\label{sub:model_formulations}

We consider several scenarios for choices of measures of risk.  
In the first scenario, we treat one of the classes (Class0) in a risk neutral manner, while applying the mean-semi-deviation measure to the classification error of the second class. 
We call this loss function ``asym\_risk'' (see Table \ref{tbl:formulations}). 
In the same table, we provide the risk measure combinations for other loss functions which we have used in our numerical experiments.
The loss functions called ``risk\_cvar'' and ``two\_cvar'' use a convex combination of the expected error and the Average Value-at-Risk of the classification error.
These convex combinations use an additional model parameter $\beta \in (0,1)$. 
We note that such a convex combination is a coherent measure of risk.
The formulation \eqref{p:RSSVM} for these loss function require modification due to the use of the variational form of the Average-Value at Risk at level $\alpha\in (0,1)$.
Table \ref{tbl:formulations} displays the chosen combinations of risk measure pairs for the binary classification scenario in order to give an easy overview.

  \begin{table}[!h]
    \centering
    \small
    \begin{tabu}{lcc}
      \textbf{Loss Function} &  Class0 -- $\varrho_1(Z^1)$ & Class1 -- $\varrho_2(Z^2)$ \\
      \tabucline[1pt]{-}
      exp\_val    & $\E[Z^1]$ & $\E[Z^2]$ \\
      joint\_cvar  & \multicolumn{2}{c}{$\beta\E[Z^1 + Z^2] + (1-\beta){\rm AVaR}_{\alpha}(Z^1 + Z^2)$ }\\
      \hline
      asym\_risk  & $\E[Z^1]$ & $\E[Z^2] + {c} \sigma^+[Z^2]$ \\
      one\_cvar   & $\E[Z^1] + {c} \sigma^+[Z^1]$  & ${\rm AVaR}_{\alpha}(Z^2)$ \\
      risk\_cvar  & $\E[Z^1] + {c} \sigma^+[Z^1]$ & $\beta\E[Z^2] + (1-\beta){\rm AVaR}_{\alpha}(Z^2)$ \\
      two\_risk   & $\E[Z^1] + {c} \sigma^+[Z^1]$ & $\E[Z^2] + {c} \sigma^+[Z^2]$ \\
      two\_cvar   & $\beta\E[Z^1] + (1-\beta){\rm AVaR}_{\alpha_1}(Z^1)$ & $\beta\E[Z^2] + (1-\beta_2){\rm AVaR}_{\alpha_2}(Z^2)$ \\
      \tabucline[1pt]{-}
    \end{tabu}
    \caption{Risk measure combinations used as loss functions in the experiments}
    \label{tbl:formulations}
  \end{table}
We note that calculation of the first order semi-deviation and the average value at risk can be formulated as linear optimization problems. Therefore, their  application does not increase the complexity of RSSVM in comparison to the soft-margin SVM. 
However, if we use higher order semi-deviations or higher order inverse risk measures, the problem becomes more difficult. 

We compare our results against three different benchmarks: two risk-neutral formulations and one risk-averse formulation with a single risk measure.
The first risk-neutral formulation is the soft-margin SVM as formulated in \eqref{eq:svmE1}. 
The second risk-neutral formulation uses the Huber loss function and leads to the following problem formulation
\begin{equation}
\label{p:svm-huber}
\begin{aligned}
\min_{v,\gamma,Z^1,Z^2}\ &\, \frac{1}{m_1} \sum_{i=1}^{m_1} \min\big(z_i^1,(z_i^1)^2\big) + \frac{1}{m_2} \sum_{j=1}^{m_2} \min\big(z_j^2,(z_j^2)^2\big) +\delta\|v\|^2\\
\text{s. t. } &\, \langle v, x^1_i\rangle - \gamma + z_i^1 \ge 1, \quad i=1,\dots,m_1,\\
&\, \langle v, x^2_{j}\rangle -\gamma  - z_{j}^2 \le -1, \quad j=1,\dots,m_2,\\
&\, Z^1\ge 0,\, Z^2\geq 0.
\end{aligned}
\end{equation}

The third benchmark uses a single risk measure \eqref{p:svm-jointCVaR} on the total error as proposed in \cite{gotoh2017support}. It has the following formulation.
\begin{equation}
\label{p:svm-jointCVaR}
\begin{aligned}
\min_{v,\gamma,t,Z^1,Z^2, Y^1,Y^2}\ &\,
 \beta\Big( \frac{1}{m_1}\sum_{j=1}^{m_1} z^1_j + \frac{1}{m_2}\sum_{j=1}^{m_2} z^2_j \Big) + \\
 &\,\qquad(1-\beta)\bigg( t + \frac{1}{\alpha(m_1+m_2)}\Big( \sum_{j=1}^{m_1} y^1_j + \sum_{j=1}^{m_2} y^2_j \Big) \bigg) +\delta\|v\|^2\\
\text{s. t. } &\, \langle v, x^1_i\rangle - \gamma + z_i^1 \ge 1, \quad i=1,\dots,m_1,\\
&\, \langle v, x^2_{j}\rangle -\gamma  - z_{j}^2 \le -1, \quad j=1,\dots,m_2,\\
&\, y_j^i\geq z_j^i-t,\quad j=1,\dots,m_i,\; i=1,2,\\
&\, Z^1\ge 0,\, Z^2\geq 0,\; Y^1\ge 0,\, Y^2\geq 0.
\end{aligned}
\end{equation}

Interestingly, both risk-neutral formulations produce identical results on all data sets.
Subsequently we only report one of them under the name ``exp\_val''. 
In the presented figures and tables below, we refer to the loss function consisting of a single Average Value-at-Risk measure, as ``joint\_cvar''.

The problem formulations which we use in our experiments are the following.
\begin{description}

  \item[Expected value vs. Average Value-at-Risk -- ``asym\_risk'']
    \begin{equation}
      \label{p:RSSVM-E-cvar}
      \begin{aligned}
      \min_{v,\gamma,t,Z^1,Z^2,Y}\quad  &\,  \frac{\lambda}{m_1}\sum_{j=1}^{m_1} z^1_j+ 
                \frac{1-\lambda}{m_2}\sum_{j=1}^{m_2} (y_j+z^2_j) +\delta\|v\|^2 \\
      \text{s. t. }\quad  &\, \langle v, x^1_j\rangle - \gamma + z_j^1 \ge 1, \quad j=1,\dots,m_1, \\
      &\, \langle v, x^2_{j}\rangle -\gamma  - z_{j}^2 \le -1,\quad j=1,\dots ,m_2,\\
      &\, y_j \geq z_j^2-t,\quad j=1,\dots ,m_2,\\
      &\, Z^1\ge 0,\, Z^2\geq 0,\; Y\ge 0. 
      \end{aligned}
    \end{equation}

  \item[Mean-semi-deviation vs. Average Value-at-Risk -- ``one\_cvar'']
    \begin{equation}
      \label{p:RSSVM-msd-cvar}
      \begin{aligned}
      \min_{v,\gamma,t,Z^1,Z^2,Y^1,Y^2}\quad  &\,  \frac{\lambda}{m_1}\sum_{j=1}^{m_1} (y^1_j+z^1_j) + (1-\lambda)\big(t + \frac{1}{\alpha m_2}\sum_{j=1}^{m_2} y^2_j\big) +\delta\|v\|^2 \\
      \text{s. t. }\quad  &\, \langle v, x^1_j\rangle - \gamma + z_j^1 \ge 1, \quad j=1,\dots,m_1, \\
      &\, \langle v, x^2_{j}\rangle -\gamma  - z_{j}^2 \le -1, \quad j=1,\dots ,m_2,\\
      &\, y_j^1 \geq z_j^1-\frac{1}{m_1}\sum_{j=1}^{m_1} z^1_j,\quad j=1,\dots ,m_1,\\
      &\, y_j^2 \geq z_j^2-t,\quad j=1,\dots ,m_2,\\
      &\, Z^1\ge 0,\, Z^2\geq 0,\; Y^1\ge 0,\, Y^2\geq 0. 
      \end{aligned}
    \end{equation}

  \item[Mean-semi-deviation vs. combination of the expectation and AVaR -- ``risk\_cvar'']
    \begin{equation}
      \label{p:RSSVM-msd-Ecvar}
      \begin{aligned}
      \min_{v,\gamma,t,Z^1,Z^2,Y^1,Y^2}\quad  &\,   \frac{\lambda}{m_1}\sum_{j=1}^{m_1} (y^1_j+z^1_j) + \frac{\beta(1-\lambda)}{m_1}\sum_{j=1}^{m_2} z_j^2\\
              &\, \qquad  + (1-\beta)(1-\lambda)\left(t + \frac{1}{\alpha m_2}\sum_{j=1}^{m_2} y^2_j\right) 
              +\delta\|v\|^2 \\
      \text{s. t. }\quad  &\, \langle v, x^1_j\rangle - \gamma + z_j^1 \ge 1, \quad j=1,\dots,m_1, \\
      &\, \langle v, x^2_{j}\rangle -\gamma  - z_{j}^2 \le -1, \quad j=1,\dots,m_2, \\
      &\, y_j^1 \geq z_j^1-\frac{1}{m_1}\sum_{j=1}^{m_1} z^1_j,\quad j=1,\dots ,m_1,\\
      &\, y_j^2 \geq z_j^2-t,\quad j=1,\dots ,m_2,\\
      &\, Z^1\ge 0,\, Z^2\geq 0,\; Y^1\ge 0,\, Y^2\geq 0. 
      \end{aligned}
    \end{equation}

  \item[Mean-semi-deviation for both classes -- ``two\_risk'']
    \begin{equation}
      \label{p:RSSVM-twomsd}
      \begin{aligned}
      \min_{v,\gamma,Z^1,Z^2,Y^1,Y^2}\quad  &\,  \frac{\lambda}{m_1}\sum_{j=1}^{m_1} (y^1_j+z^1_j) + \frac{1-\lambda}{m_2}\sum_{j=1}^{m_2} (y^2_j+z^2_j) +\delta\|v\|^2 \\
      \text{s. t. }\quad  &\, \langle v, x^1_j\rangle - \gamma + z_j^1 \ge 1, \quad j=1,\dots,m_1, \\
      &\, \langle v, x^2_{j}\rangle -\gamma  - z_{j}^2 \le -1, \quad j=1,\dots,m_2, \\
      &\, y_j^i \geq z_j^i-\frac{1}{m_i}\sum_{j=1}^{m_i} z_j^i,,\quad j=1,\dots ,m_i,\; i=1,2,\\
      &\, Z^1\ge 0,\, Z^2\geq 0,\; Y^1\ge 0,\, Y^2\geq 0.
      \end{aligned}
    \end{equation}
  \item[Average-Value at Risk for both classes -- ``two\_cvar'']
    \begin{equation}
      \label{p:RSSVM-twocvar}
      \begin{aligned}
      \min_{v,\gamma,t_1,t_2,Z^1,Z^2,Y^1,Y^2}\quad  &\,  \delta\|v\|^2 + \lambda \beta_1 \sum_{j=1}^{m_1} z^1_j + \lambda(1-\beta_1)\left(t_1 + \frac{1}{\alpha m_1}\sum_{j=1}^{m_1} y^1_j\right) \\
      &\, \quad + (1-\lambda)\beta_2 \sum_{j=1}^{m_1} z^2_j +  (1-\lambda)(1-\beta_2)\left(t_2 + \frac{1}{\alpha m_2}\sum_{j=1}^{m_2} y^2_j\right)  \\
      \text{s. t. }\quad  &\, \langle v, x^1_j\rangle - \gamma + z_j^1 \ge 1, \quad j=1,\dots,m_1, \\
      &\, \langle v, x^2_{j}\rangle -\gamma  - z_{j}^2 \le -1,\quad j=1,\dots,m_2, \\
      &\, y_j^i \geq z_j^i-t_i,\quad j=1,\dots ,m_i,\; i=1,2,\\
      &\, Z^1\ge 0,\, Z^2\geq 0,\; Y^1\ge 0,\, Y^2\geq 0. 
      \end{aligned}
    \end{equation}
\end{description}


\begin{table*}[h!]
\centering
\small
\begin{tabu}{r|rrrrrrr}
  \multicolumn{8}{l}{$F_1$-score Optimized Classifiers} \\
  \hline
 \multicolumn{2}{r}{exp\_val} & joint\_cvar & asym\_risk & one\_cvar & risk\_cvar & two\_risk & two\_cvar \\
  \hline
  lambda &  &  & 0.70 & 0.57 & 0.56 & 0.60 & 0.64 \\ 
  alpha\_1 &  &  &  &  &  &  & 0.62 \\ 
  alpha\_2 &  & 0.55 &  & 0.88 & 0.75 &  & 0.62 \\ 
  C0 Errors & 21 & 17 & 16 & 13 & 11 & 15 & 12 \\ 
  C1 Errors & 15 & 11 & 11 & 10 & 9 & 9 & 9 \\
  FPR & 0.05882 & 0.04762 & 0.04482 & 0.03641 & 0.03081 & 0.04202 & 0.03361 \\ 
  Recall & 0.92925 & 0.94811 & 0.94811 & 0.95283 & 0.95755 & 0.95755 & 0.95755 \\ 
  Precision & 0.90367 & 0.92202 & 0.92627 & 0.93953 & 0.94860 & 0.93119 & 0.94419 \\ 
  $F_1$-score & 0.91628 & 0.93488 & 0.93706 & 0.94614 & \textbf{0.95305} & 0.94419 & 0.95082 \\ 
  AUC & 0.97904 & 0.98426 & 0.98569 & \textbf{0.98764} & 0.98535 & 0.98442 & 0.98451 \\ 
  \hline
  \multicolumn{8}{l}{} \\
  \multicolumn{8}{l}{AUC Optimized Classifiers} \\
  \hline
 \multicolumn{2}{r}{exp\_val} & joint\_cvar & asym\_risk & one\_cvar & risk\_cvar & two\_risk & two\_cvar \\
  \hline
  lambda &  &  & 0.43 & 0.57 & 0.69 & 0.37 & 0.42 \\ 
  alpha\_1 &  &  &  &  &  &  & 0.61 \\ 
  alpha\_2 &  & 0.65 &  & 0.88 & 0.66 &  & 0.61 \\ 
  C0 Errors & 21 & 21 & 18 & 13 & 14 & 23 & 16 \\ 
  C1 Errors & 15 & 13 & 11 & 10 & 13 & 12 & 13 \\ 
  FPR & 0.05882 & 0.05882 & 0.05042 & 0.03641 & 0.03922 & 0.06443 & 0.04482 \\ 
  Recall & 0.92925 & 0.93868 & 0.94811 & 0.95283 & 0.93868 & 0.94340 & 0.93868 \\ 
  Precision & 0.90367 & 0.90455 & 0.91781 & 0.93953 & 0.93427 & 0.89686 & 0.92558 \\ 
  $F_1$-score & 0.91628 & 0.92130 & 0.93271 & \textbf{0.94614} & 0.93647 & 0.91954 & 0.93208 \\ 
  AUC & 0.97904 & 0.98471 & 0.98697 & 0.98764 & 0.98776 & 0.98629 & \textbf{0.98922} \\ 
  \hline
  \end{tabu}
  \caption{Main results table for the WDBC dataset -- Displaying the model parameters for the each model formulation as well as the corresponding performance metrics.}
  \label{tbl:main-results_wdbc}
\end{table*}

\subsection{Performance} 
\label{sub:performance}


We perform $k$-fold cross-validation and all reported results are out of sample.
In Tables \ref{tbl:main-results_wdbc}, \ref{tbl:main-results_pima-indians-diabetes}, and \ref{tbl:main-results_seismic-bumps}, we report the F$_1$--score  and AUC, along with recall, precision, as well as false positive rate (FPR) for all loss functions. 
Additionally, we report the number of misclassified observations, as well as the chosen parameters where applicable.
In light of the fact that the F$_1$--score and AUC are competing metrics, for each dataset we present one of results results optimized for each metric.
We use this highlight the additional flexibility that the proposed method introduces, in the next section.

In the above Table \ref{tbl:main-results_wdbc}, we show the best value for each metric for each set in bold face.
We observe that for this particular dataset, the best performing model formulation with respect to the $F_1$-score is the ``risk\_cvar'' model; outperforming the risk neutral formulations by more than $0.04$.
On the other hand, if we consider the AUC to be the target metric, we notice the ``two\_cvar'' formulation has the highest value.
Further, we note that the ``one\_cvar'' model has the same parameters for both target metrics.
We find this to be unusual in our experiments.
While this formulation does not have the best value for the target metric, it too significantly outperforms the risk neutral formuations.

\begin{figure*}[h!]
\begin{center}
\includegraphics[width=0.95\linewidth]{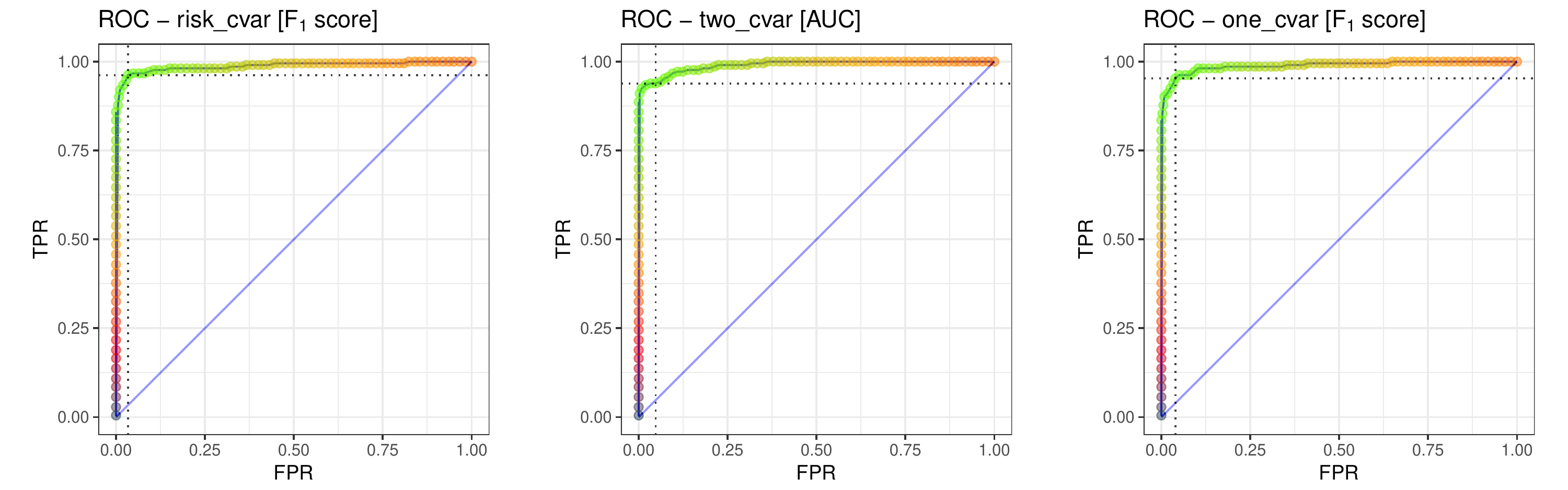}
\caption{ROC plots for the best performing model formulations on the WDBC data: ``risk\_cvar'' with the best $F_1$-score, ``two\_cvar'' with the best AUC value, and ``one\_cvar'' for the alternate metric.}
\label{fig:best-models-wdbc-roc}
\end{center}
\end{figure*}

Further, this formulation does have the best value for the competing metric in both cases.
The respective ROC curves for each of the classifiers are displayed in Figure \ref{fig:best-models-wdbc-roc}.
The color on each curve represents the value of the $F_1$-score.
High values are represented by the bright green color, and low values are represented by the dark red color.
The two dotted lines indicate the threshold at which the classifier is set to operate.

We can certainly see the classifier performs very well on this data. 
Table \ref{tbl:risk_eval-wdbc} contains the calulations of risk, with respect to each model formulation.
More specifically, for each obtained classifier we caculate the value of the risk functionals on the out of the sample data points during cross-validation.
We consider the raw expectation, mean semi-deviation, as well as the avarage value at risk for the $\alpha$ quantiles 0.75, 0.85, and 0.95.

\begin{table*}[h!]
\centering
\small
\begin{tabu}{rr|rrrrr}
  \multicolumn{7}{l}{WDBC} \\
  \tabucline[2pt]{-}
   \multicolumn{2}{r}{} & Expectation & MSD & AVaR$_{0.75}$ & AVaR$_{0.85}$ & AVaR$_{0.95}$ \\
  \hline
  \multirow{3}{*}{exp\_val}    & C0 Risk & 0.000189 & 0.000368 & 0.000252 & 0.000223 & 0.000199 \\ 
                               & C1 Risk & 0.000343 & 0.000663 & 0.000457 & 0.000403 & 0.000361 \\
                               & Total   & 0.000532 & 0.001030 & 0.000709 & 0.000626 & 0.000560 \\ 
                               \hline 
  \multirow{3}{*}{joint\_cvar} & C0 Risk & 0.000158 & 0.000309 & 0.000211 & 0.000186 & 0.000167 \\ 
                               & C1 Risk & 0.000241 & 0.000470 & 0.000322 & 0.000284 & 0.000254 \\ 
                               & Total   & 0.000400 & 0.000779 & 0.000533 & 0.000470 & 0.000421 \\ 
  \tabucline[1pt]{-}
  \multirow{3}{*}{asym\_risk}  & C0 Risk & 0.000121 & 0.000237 & 0.000161 & 0.000142 & 0.000127 \\ 
                               & C1 Risk & 0.000194 & 0.000378 & 0.000259 & 0.000228 & 0.000204 \\
                               & Total   & 0.000315 & 0.000615 & 0.000420 & 0.000371 & 0.000332 \\ 
                               \hline 
  \multirow{3}{*}{one\_cvar}   & C0 Risk & 0.000085 & 0.000166 & 0.000113 & 0.000100 & 0.000089 \\ 
                               & C1 Risk & 0.000172 & 0.000335 & 0.000229 & 0.000202 & 0.000181 \\ 
                               & Total   & 0.000256 & 0.000501 & 0.000342 & 0.000302 & 0.000270 \\ 
                               \hline
  \multirow{3}{*}{risk\_cvar}  & C0 Risk & 0.000080 & 0.000157 & 0.000106 & 0.000094 & 0.000084 \\ 
                               & C1 Risk & 0.000185 & 0.000363 & 0.000247 & 0.000218 & 0.000195 \\ 
                               & Total   & 0.000265 & 0.000520 & 0.000353 & 0.000312 & 0.000279 \\ 
                               \hline
  \multirow{3}{*}{two\_risk}   & C0 Risk & 0.000125 & 0.000246 & 0.000167 & 0.000148 & 0.000132 \\ 
                               & C1 Risk & 0.000182 & 0.000356 & 0.000242 & 0.000214 & 0.000191 \\ 
                               & Total   & 0.000307 & 0.000601 & 0.000410 & 0.000361 & 0.000323 \\ 
                               \hline
  \multirow{3}{*}{two\_cvar}   & C0 Risk & 0.000085 & 0.000167 & 0.000113 & 0.000100 & 0.000089 \\ 
                               & C1 Risk & 0.000235 & 0.000460 & 0.000314 & 0.000277 & 0.000248 \\ 
                               & Total   & 0.000320 & 0.000628 & 0.000427 & 0.000377 & 0.000337 \\ 
  \tabucline[2pt]{-}
\end{tabu}
\caption{Risk Evalutation for the WDBC data set -- Displaying the expectation of error, Mean Semi-deviation, and Avarage Value at Risk for the $\alpha$ quantiles 0.75, 0.85, and  0.95}
\label{tbl:risk_eval-wdbc}
\end{table*}

Indeed, we can observe that our models reduce the risk for each class with respect to each risk calculation, compared to the benchmarks.
More specifically, we notice that the ``one\_cvar'' model, which does not attain the best performance in terms of $F_1$-score, but does, in fact, attain the lowest total risk value. 
Its value is approximately one half that of the risk neutral formulation, and that of the other benchmark.
The ``risk\_cvar'' model does perform nearly identically, albeit having at slightly larger values across the board.
Further, we note that the ``two\_cvar'' model, which performes best with respect to the AUC metric is the worst performing, benchmarks excluded.
Looking closely at the corresponding ROC curve in Figure \ref{fig:best-models-wdbc-roc} one can argue that the performance with respect to the AUC metric, comes at the expense of robustness and generalization.

Looking at the results on the ``pima-indians-diabetes'' data set in Table \ref{tbl:main-results_pima-indians-diabetes} we observe that the best performing model with respect to $F_1$-score is the again ``risk\_cvar'' model with 0.68581 compared to the 0.66785 of the risk neutral formulations. 
Similarly, the ``one\_cvar'' model is again second in this conext, at the same time having the largest AUC value for the group.
Surprisingly, the benchmark formulation ``joint\_cvar'' has the lowest score here.
Switching the attention to the AUC section of the table, we notice that ``one\_cvar'' is the best performing model in that regard well; with the ``risk\_cvar'' being second best.
However, the gain in AUC value with the changed parameters is minimal with a considerable reduction in the alternate target metric; ``one\_cvar'' shifting from 0.68581 $F_1$-score to 0.65377 in exchange for 0.0027 gain in AUC, and ``risk\_cvar'' shifting from 0.68781 $F_1$ to 0.65504 for a gain of 0.003.

\begin{table*}[h!]
\centering
\small
\begin{tabu}{r|rrrrrrr}
  \multicolumn{8}{l}{$F_1$-score Optimized Classifiers} \\
  \hline
 \multicolumn{2}{r}{exp\_val} & joint\_cvar & asym\_risk & one\_cvar & risk\_cvar & two\_risk & two\_cvar \\
  \hline
  lambda &  &  & 0.48 & 0.51 & 0.49 & 0.48 & 0.44 \\ 
  alpha\_1 &  &  &  &  &  &  & 0.58 \\ 
  alpha\_2 &  & 0.90 &  & 0.68 & 0.56 &  & 0.58 \\ 
  C0 Errors & 107 & 92 & 158 & 121 & 125 & 129 & 157 \\ 
  C1 Errors & 80 & 93 & 46 & 65 & 62 & 63 & 48 \\ 
  FPR & 0.21400 & 0.18400 & 0.31600 & 0.24200 & 0.25000 & 0.25800 & 0.31400 \\ 
  Recall & 0.70149 & 0.65299 & 0.82836 & 0.75746 & 0.76866 & 0.76493 & 0.82090 \\ 
  Precision & 0.63729 & 0.65543 & 0.58421 & 0.62654 & 0.62236 & 0.61377 & 0.58355 \\ 
  $F_1$-score & 0.66785 & 0.65421 & 0.68519 & 0.68581 & \textbf{0.68781} & 0.68106 & 0.68217 \\ 
  AUC & 0.83039 & 0.83243 & 0.82900 & \textbf{0.83078} & 0.83033 & 0.82967 & 0.82830 \\ 
  \hline
  \multicolumn{8}{l}{} \\
  \multicolumn{8}{l}{AUC Optimized Classifiers} \\
  \hline
  \multicolumn{2}{r}{exp\_val} & joint\_cvar & asym\_risk & one\_cvar & risk\_cvar & two\_risk & two\_cvar \\
  \hline
  lambda &  &  & 0.51 & 0.54 & 0.54 & 0.50 & 0.60 \\ 
  alpha\_1 &  &  &  &  &  &  & 0.69 \\ 
  alpha\_2 &  & 0.59 &  & 0.86 & 0.76 &  & 0.69 \\ 
  C0 Errors & 107 & 87 & 140 & 80 & 79 & 113 & 74 \\ 
  C1 Errors & 80 & 98 & 59 & 99 & 99 & 78 & 106 \\ 
  FPR & 0.21400 & 0.17400 & 0.28000 & 0.16000 & 0.15800 & 0.22600 & 0.14800 \\ 
  Recall & 0.70149 & 0.63433 & 0.77985 & 0.63060 & 0.63060 & 0.70896 & 0.60448 \\ 
  Precision & 0.63729 & 0.66148 & 0.59885 & 0.67871 & 0.68145 & 0.62706 & 0.68644 \\ 
  $F_1$-score & 0.66785 & 0.64762 & \textbf{0.67747} & 0.65377 & 0.65504 & 0.66550 & 0.64286 \\ 
  AUC & 0.83039 & 0.83279 & 0.83081 & \textbf{0.83348} & 0.83332 & 0.83049 & 0.83267 \\ 
  \hline
  \end{tabu}
\caption{Main results table for the ``pima-indians-diabetes'' dataset -- Displaying the model parameters for the each model formulation as well as the corresponding performance metrics.}
\label{tbl:main-results_pima-indians-diabetes}
\end{table*}

Looking closely at the ROC curves in Figure \ref{fig:best-models-pima-indians-diabetes-roc} we can see that the AUC pioritized ``one\_cvar'' actually does not classify at its maximum potential in terms of $F_1$-score, indicated by the fact that the threshold is not at the lightest green segment of the curve.
This requires additional investigation and exploration.

\begin{figure*}[h!]
\begin{center}
\includegraphics[width=0.95\linewidth]{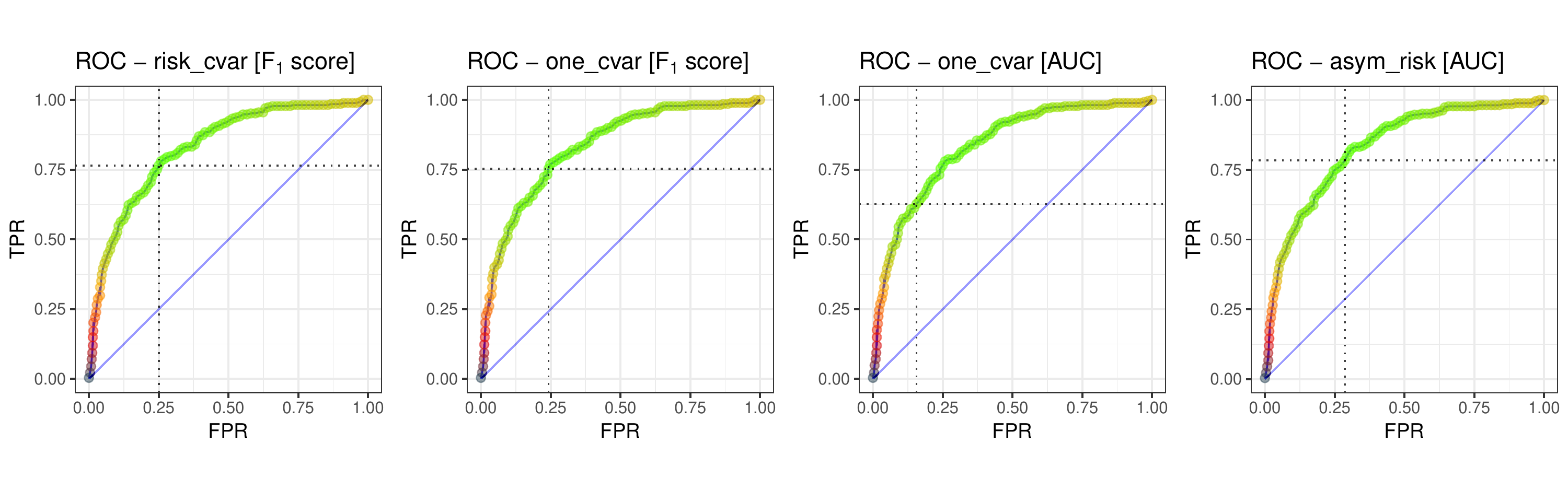}
\caption{ROC plots for the best performing model formulations on the ``pima-indians-diabetes'' data: ``risk\_cvar'' with the best $F_1$-score, ``one\_cvar'' featuring both parameter sets, and finally the ``asym\_risk'' formulation featuring the best AUC value}
\label{fig:best-models-pima-indians-diabetes-roc}
\end{center}
\end{figure*}

Figure \ref{fig:pima-indians-diabetes_cdf-roc_fscore} shows how the empirical distribution of error realizations from applying the classifier to out-of-sample records on the left, and the overlayed ROC curves for the various classifiers on the right. 
Negative values indicate correctly classified observations, while positive values indicate misclassification.
We compare the select loss functions to eachother and the benchmarks.
Virtually no distinction can be made between the ROC curves for the various classifiers.
However, looking at the error distribution plot on the left, we notice that the the two benchmarks misclassify less of the default class and more of the target class.
On the other hand, the ``two\_cvar'' formulation underperforms for the opposite reason, in relation to the target metric and the best performing formulation ``risk\_cvar''.

\begin{figure*}[!h]
\centering
\includegraphics[width=0.545\linewidth]{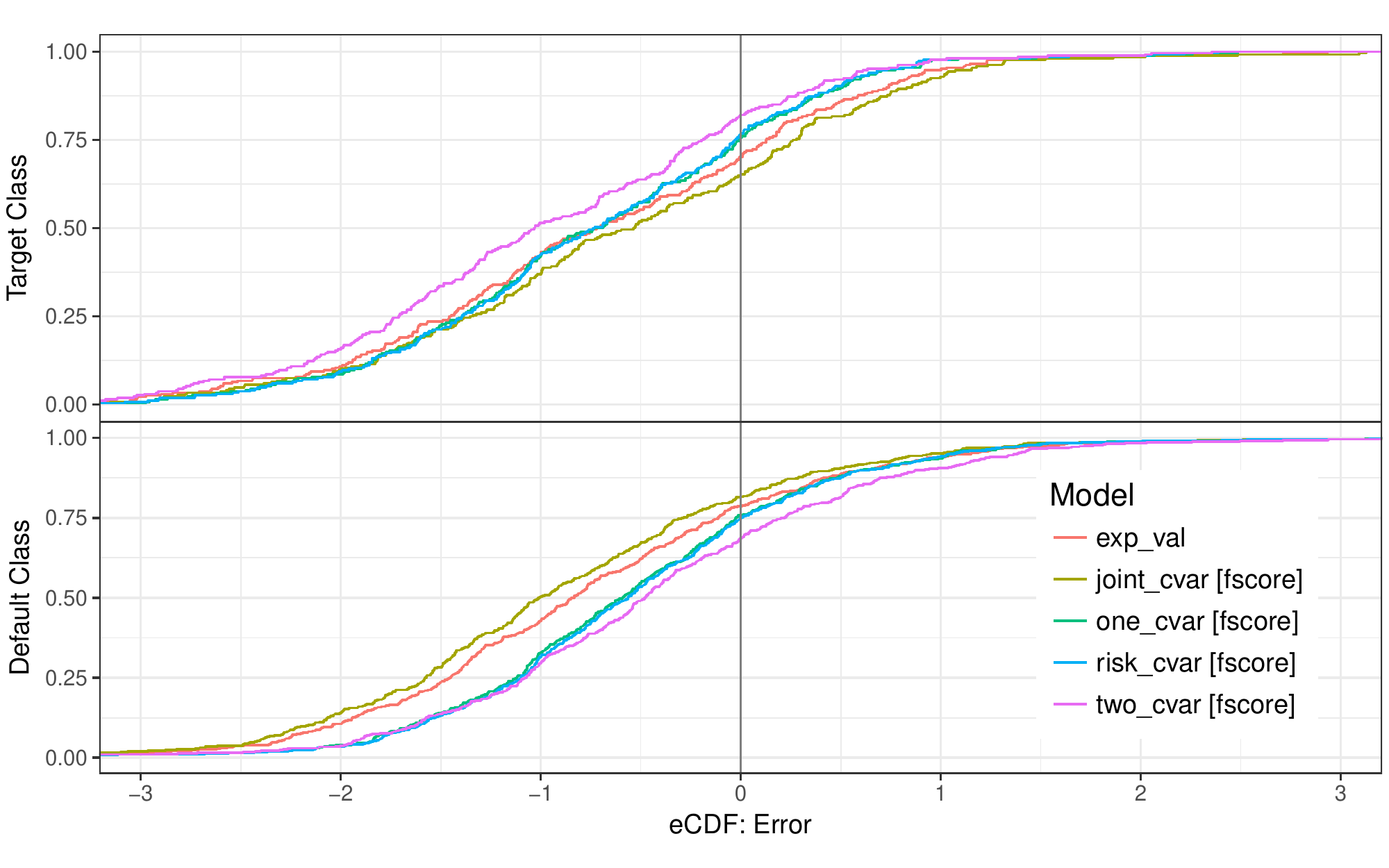}
\includegraphics[width=0.445\linewidth]{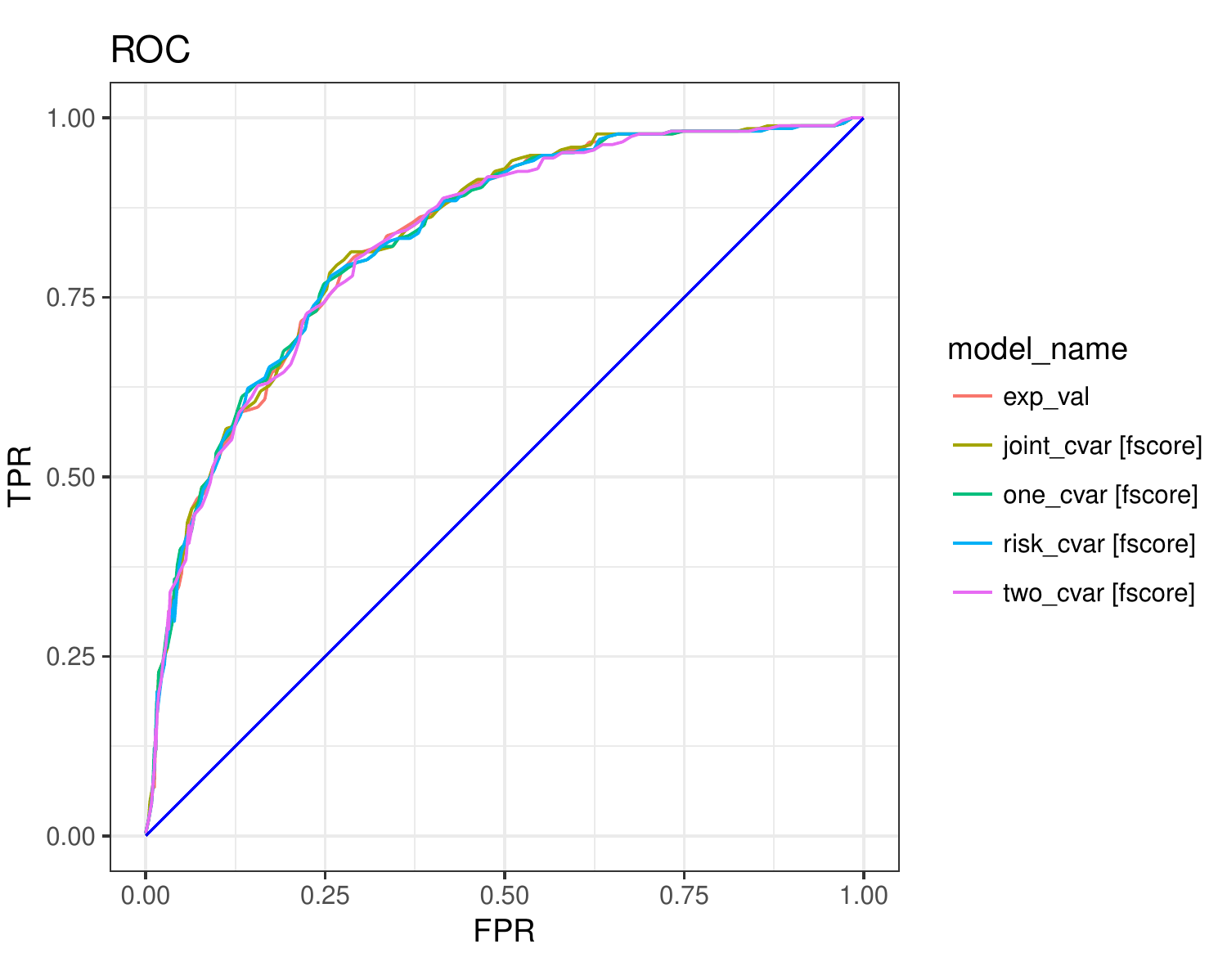}
\caption{Empirical distribution of error realizations comparing risk-averse loss function formulations to benchmarks [$F_1$-score] on the ``pima-indians-diabetes'' dataset (left) and the corresponding ROC curves (right)}
\label{fig:pima-indians-diabetes_cdf-roc_fscore}
\end{figure*}

\begin{table*}[h!]
\centering
\small
\begin{tabu}{rr|rrrrr}
  \multicolumn{7}{l}{pima-indians-diabetes} \\
  \tabucline[2pt]{-}
   \multicolumn{2}{r}{} & Expectation & MSD & AVaR$_{0.75}$ & AVaR$_{0.85}$ & AVaR$_{0.95}$ \\
  \hline
  \multirow{3}{*}{exp\_val}    & C0 Risk & 0.164317 & 0.296266 & 0.219089 & 0.193314 & 0.172965 \\ 
                               & C1 Risk & 0.183513 & 0.318461 & 0.244684 & 0.215898 & 0.193172 \\ 
                               & Total   & 0.347830 & 0.614727 & 0.463773 & 0.409212 & 0.366137 \\ 
                               \hline 
  \multirow{3}{*}{joint\_cvar} & C0 Risk & 0.132718 & 0.242794 & 0.176957 & 0.156138 & 0.139703 \\ 
                               & C1 Risk & 0.226791 & 0.383421 & 0.302387 & 0.266812 & 0.238727 \\ 
                               & Total   & 0.359508 & 0.626215 & 0.479344 & 0.422951 & 0.378430 \\ 
  \tabucline[1pt]{-}
  \multirow{3}{*}{asym\_risk}  & C0 Risk & 0.251054 & 0.431147 & 0.334738 & 0.295357 & 0.264267 \\ 
                               & C1 Risk & 0.092539 & 0.169554 & 0.123385 & 0.108869 & 0.097409 \\ 
                               & Total   & 0.343593 & 0.600701 & 0.458124 & 0.404227 & 0.361676 \\ 
                               \hline 
  \multirow{3}{*}{one\_cvar}   & C0 Risk & 0.167050 & 0.296830 & 0.222733 & 0.196529 & 0.175842 \\ 
                               & C1 Risk & 0.128815 & 0.229708 & 0.171754 & 0.151547 & 0.135595 \\ 
                               & Total   & 0.295865 & 0.526538 & 0.394487 & 0.348077 & 0.311437 \\ 
                               \hline
  \multirow{3}{*}{risk\_cvar}  & C0 Risk & 0.168882 & 0.299515 & 0.225176 & 0.198685 & 0.177771 \\ 
                               & C1 Risk & 0.123088 & 0.220300 & 0.164118 & 0.144810 & 0.129567 \\ 
                               & Total   & 0.291970 & 0.519815 & 0.389294 & 0.343495 & 0.307337 \\ 
                               \hline
  \multirow{3}{*}{two\_risk}   & C0 Risk & 0.152290 & 0.269093 & 0.203053 & 0.179165 & 0.160305 \\ 
                               & C1 Risk & 0.110126 & 0.195772 & 0.146835 & 0.129560 & 0.115922 \\ 
                               & Total   & 0.262416 & 0.464865 & 0.349888 & 0.308725 & 0.276227 \\ 
                               \hline
  \multirow{3}{*}{two\_cvar}   & C0 Risk & 0.240685 & 0.415233 & 0.320913 & 0.283158 & 0.253352 \\ 
                               & C1 Risk & 0.103057 & 0.188842 & 0.137409 & 0.121244 & 0.108481 \\ 
                               & Total   & 0.343742 & 0.604075 & 0.458322 & 0.404402 & 0.361833 \\
  \tabucline[2pt]{-}
\end{tabu}
\caption{Risk Evalutation for the ``pima-indians-diabetes'' data set -- Displaying the expectation of error, Mean Semi-deviation, and Avarage Value at Risk for the $\alpha$ quantiles 0.75, 0.85, and  0.95}
\label{tbl:risk_eval-pima-indians-diabetes}
\end{table*}

Table \ref{tbl:main-results_seismic-bumps} contains the risk functional evalutation for the ``pima-indians-data''.
It is interesting that the ``two\_risk'' model has the lowest total risk with respect to every risk functional, despite the fact that is not the best performing model in terms of $F_1$-score or AUC.
This leads us to believe that there may be room for additional exploration with regard to performance metrics and evaluation.


We continue with the performance evalution on the third and final dataset, whose main performance metrics are shown in Table \ref{tbl:main-results_seismic-bumps}.
One can immediately observe, that no model performs particularly well on this dataset.
We have chosen this data set for being particularly imbalanced and containing categorical varibles.

\begin{table*}[h!]
\centering
\small
  \begin{tabu}{rrrrrrrr}
  \multicolumn{8}{l}{$F_1$-score Optimized Classifiers} \\
  \hline
  \multicolumn{2}{r}{exp\_val} & joint\_cvar & asym\_risk & one\_cvar & risk\_cvar & two\_risk & two\_cvar \\
  \hline
  lambda &  &  & 0.61 & 0.60 & 0.59 & 0.53 & 0.70 \\ 
  alpha\_1 &  &  &  &  &  &  & 0.92 \\ 
  alpha\_2 &  & 0.60 &  & 0.86 & 0.84 &  & 0.92 \\ 
  C0 Errors & 471 & 203 & 269 & 248 & 230 & 270 & 201 \\ 
  C1 Errors & 64 & 93 & 83 & 85 & 87 & 83 & 94 \\ 
  FPR & 0.19511 & 0.08409 & 0.11143 & 0.10273 & 0.09528 & 0.11185 & 0.08326 \\ 
  Recall & 0.62353 & 0.45294 & 0.51176 & 0.50000 & 0.48824 & 0.51176 & 0.44706 \\ 
  Precision & 0.18371 & 0.27500 & 0.24438 & 0.25526 & 0.26518 & 0.24370 & 0.27437 \\ 
  $F_1$-score & 0.28380 & 0.34222 & 0.33080 & 0.33797 & \textbf{0.34369} & 0.33017 & 0.34004 \\ 
  AUC & 0.76157 & 0.75482 & \textbf{0.76187} & 0.75595 & 0.75496 & 0.75133 & 0.75629 \\ 
  \hline
  \multicolumn{8}{l}{} \\
  \multicolumn{8}{l}{AUC Optimized Classifiers} \\
  \hline
  \multicolumn{2}{r}{exp\_val} & joint\_cvar & asym\_risk & one\_cvar & risk\_cvar & two\_risk & two\_cvar \\
  \hline
  lambda &  &  & 0.60 & 0.47 & 0.47 & 0.49 & 0.47 \\ 
  alpha\_1 &  &  &  &  &  &  & 0.56 \\ 
  alpha\_2 &  & 0.93 &  & 0.75 & 0.58 &  & 0.56 \\
  C0 Errors & 471 & 261 & 292 & 812 & 817 & 571 & 633 \\ 
  C1 Errors & 64 & 84 & 82 & 50 & 48 & 62 & 54 \\ 
  FPR & 0.19511 & 0.10812 & 0.12096 & 0.33637 & 0.33844 & 0.23654 & 0.26222 \\ 
  Recall & 0.62353 & 0.50588 & 0.51765 & 0.70588 & 0.71765 & 0.63529 & 0.68235 \\ 
  Precision & 0.18371 & 0.24784 & 0.23158 & 0.12876 & 0.12993 & 0.15906 & 0.15487 \\ 
  $F_1$-score & 0.28380 & \textbf{0.33269} & 0.32000 & 0.21779 & 0.22002 & 0.25442 & 0.25245 \\ 
  AUC & 0.76157 & 0.76068 & 0.76360 & 0.76489 & 0.76611 & 0.76344 & \textbf{0.76637} \\ 
  \hline
  \end{tabu}
  \caption{Main results table for the ``seismic-bumps'' dataset -- Displaying the model parameters for the each model formulation as well as the corresponding performance metrics.}
  \label{tbl:main-results_seismic-bumps}
\end{table*}

Again, we see the ``risk\_cvar'' formulation as having the best $F_1$-score, followed very closely by the ``joint\_cvar'' formulation. In terms of AUC, it is the ``two\_cvar'' formulation that leads group, but again at a significant cost of the $F_1$-score. 
Looking at Figure \ref{fig:best-models-seismic-bumps-roc}, we can see room for improvemnts to the this by changing the threshold on the AUC prioritzed ``two\_cvar'' model.
We observe that in terms of stability to that respect, the ``asy\_risk'' formulation along with ``joint\_cvar'' benchmark have less variation.

\begin{figure*}[h!]
\begin{center}
\includegraphics[width=0.95\linewidth]{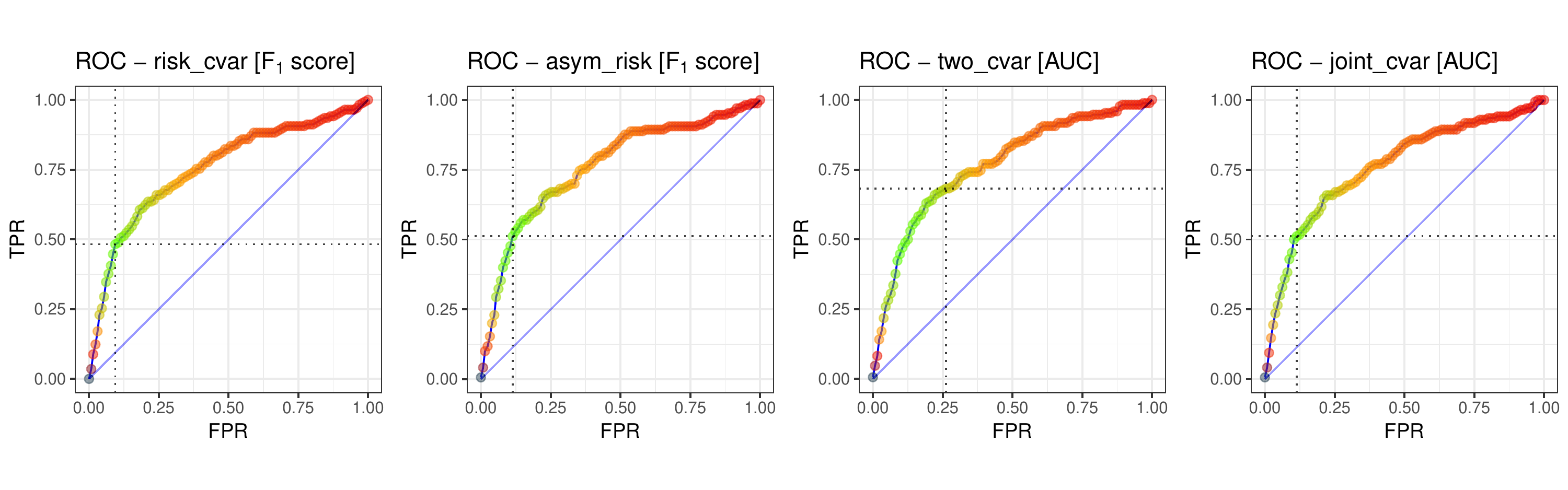}
\caption{ROC plots for the best performing model formulations on the ``seismic-bumps'' data: ``risk\_cvar'' with the best $F_1$-score, ``one\_cvar'', ``joint\_cvar'', ``two\_cvar'' formulation featuring the best AUC value}
\label{fig:best-models-seismic-bumps-roc}
\end{center}
\end{figure*}

Turning the attention to the risk functional evaluation in Table \ref{tbl:risk_eval-seismic-bumps}, we observe that the ``exp\_val'' benchmark model has the lowest total on the ``seismic-bumps''. However, being that this dataset is very imbalanced, we can see how significantly different the risk functional evaluation is between the two classes for each model formulation.

\begin{table*}[h!]
\centering
\small
\begin{tabu}{rr|rrrrr}
  \multicolumn{7}{l}{seismic-bumps} \\
  \tabucline[2pt]{-}
   \multicolumn{2}{r}{} & Expectation & MSD & AVaR$_{0.75}$ & AVaR$_{0.85}$ & AVaR$_{0.95}$ \\
  \hline
  \multirow{3}{*}{exp\_val}    & C0 Risk & 0.039589 & 0.072043 & 0.052786 & 0.046576 & 0.041673 \\ 
                               & C1 Risk & 0.064462 & 0.106979 & 0.085950 & 0.075838 & 0.067855 \\ 
                               & Total   & 0.104052 & 0.179022 & 0.138735 & 0.122414 & 0.109528 \\ 
                               \hline 
  \multirow{3}{*}{joint\_cvar} & C0 Risk & 0.015641 & 0.030007 & 0.020854 & 0.018401 & 0.016464 \\ 
                               & C1 Risk & 0.131682 & 0.199685 & 0.175576 & 0.154920 & 0.138613 \\ 
                               & Total   & 0.147323 & 0.229693 & 0.196430 & 0.173321 & 0.155077 \\ 
  \tabucline[1pt]{-}
  \multirow{3}{*}{asym\_risk}  & C0 Risk & 0.018930 & 0.035855 & 0.025239 & 0.022270 & 0.019926 \\ 
                               & C1 Risk & 0.099935 & 0.156758 & 0.133246 & 0.117570 & 0.105194 \\ 
                               & Total   & 0.118864 & 0.192613 & 0.158485 & 0.139840 & 0.125120 \\ 
                               \hline 
  \multirow{3}{*}{one\_cvar}   & C0 Risk & 0.019387 & 0.036922 & 0.025850 & 0.022809 & 0.020408 \\ 
                               & C1 Risk & 0.116238 & 0.179858 & 0.154983 & 0.136750 & 0.122355 \\ 
                               & Total   & 0.135625 & 0.216780 & 0.180833 & 0.159559 & 0.142763 \\ 
                               \hline
  \multirow{3}{*}{risk\_cvar}  & C0 Risk & 0.015942 & 0.030445 & 0.021256 & 0.018755 & 0.016781 \\ 
                               & C1 Risk & 0.107669 & 0.164839 & 0.143559 & 0.126669 & 0.113336 \\ 
                               & Total   & 0.123611 & 0.195284 & 0.164814 & 0.145424 & 0.130116 \\ 
                               \hline
  \multirow{3}{*}{two\_risk}   & C0 Risk & 0.015797 & 0.029943 & 0.021062 & 0.018584 & 0.016628 \\ 
                               & C1 Risk & 0.088633 & 0.139315 & 0.118177 & 0.104274 & 0.093298 \\ 
                               & Total   & 0.104430 & 0.169258 & 0.139239 & 0.122858 & 0.109926 \\ 
                               \hline
  \multirow{3}{*}{two\_cvar}   & C0 Risk & 0.013332 & 0.025589 & 0.017776 & 0.015685 & 0.014034 \\ 
                               & C1 Risk & 0.110821 & 0.167536 & 0.147762 & 0.130378 & 0.116654 \\ 
                               & Total   & 0.124153 & 0.193126 & 0.165538 & 0.146063 & 0.130688 \\ 
  \tabucline[2pt]{-}
\end{tabu}
\caption{Risk Evalutation for the ``seismic-bumps'' data set -- Displaying the expectation of error, Mean Semi-deviation, and Avarage Value at Risk for the $\alpha$ quantiles 0.75, 0.85, and  0.95}
\label{tbl:risk_eval-seismic-bumps}
\end{table*}

Notice, in Figure \ref{fig:seismic-bumps_cdf-roc_fscore}, how the ``exp\_val'' benchmark stands alone compared to the well grouped risk aware models, which includes the benchmark formulation ``joint\_cvar''. Similarly, as on the previous dataset, the ROC curves are very much grouped.

In summary, the $F_1$-score prioritized model consistently provides small but significant improvement over the baseline models.


\begin{figure*}[!h]
\centering
\includegraphics[width=0.545\linewidth]{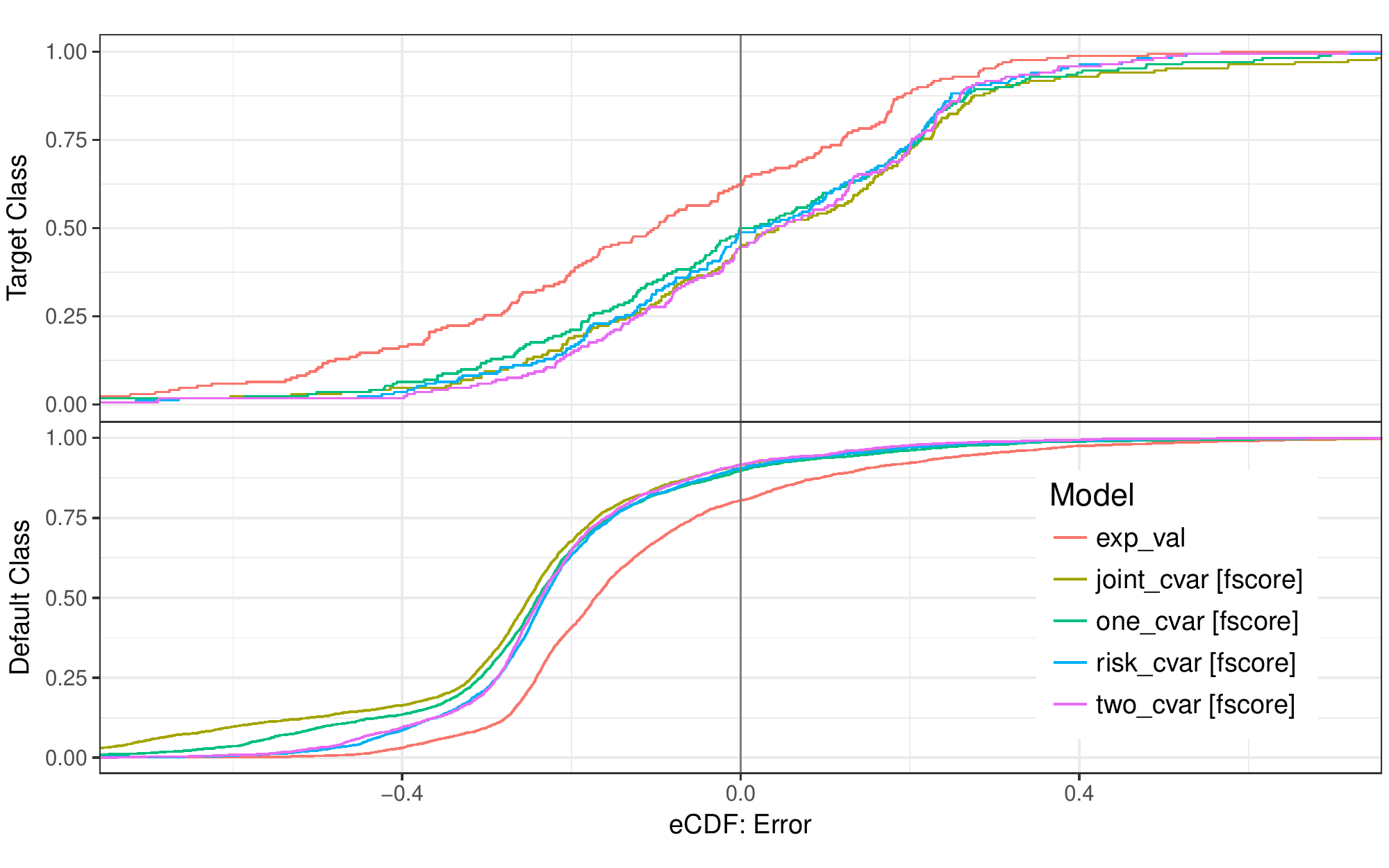}
\includegraphics[width=0.445\linewidth]{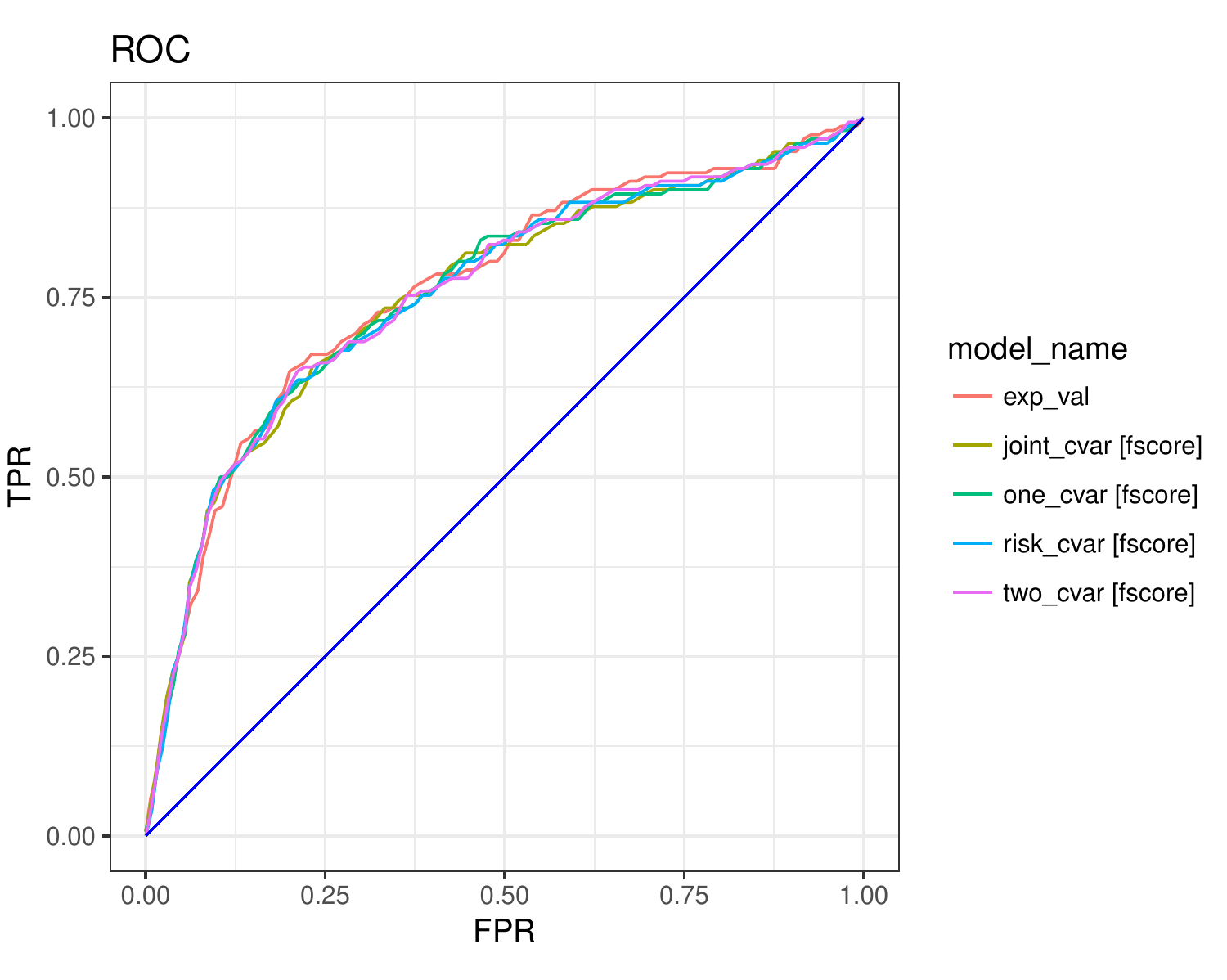}
\caption{Empirical distribution of error realizations comparing risk-averse loss function formulations to benchmarks [$F_1$-score] on the ``seismic-bumps'' dataset (left) and the corresponding ROC curves (right)}
\label{fig:seismic-bumps_cdf-roc_fscore}
\end{figure*}


\begin{figure*}[h!]
\centering
\includegraphics[width=0.325\linewidth]{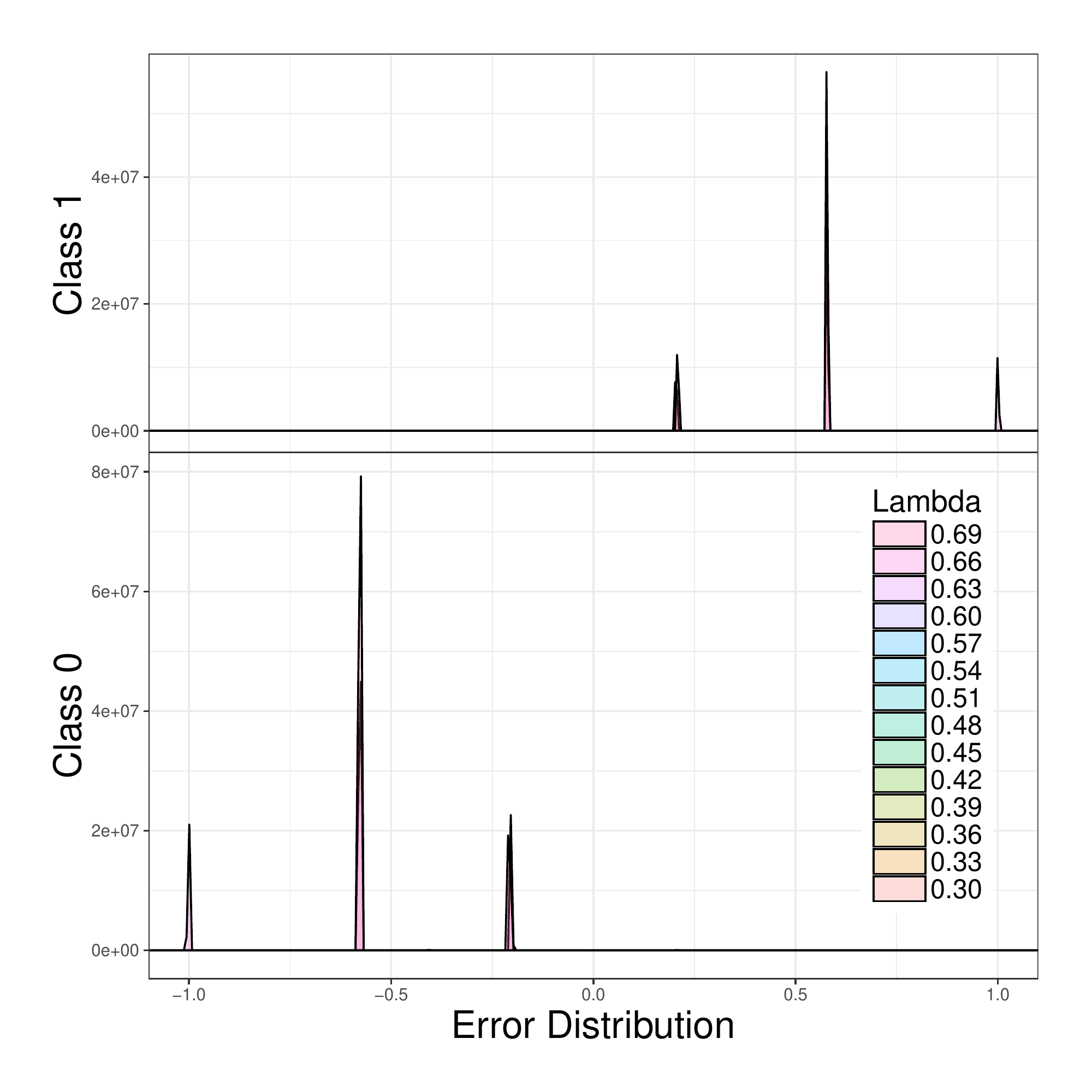}
\includegraphics[width=0.325\linewidth]{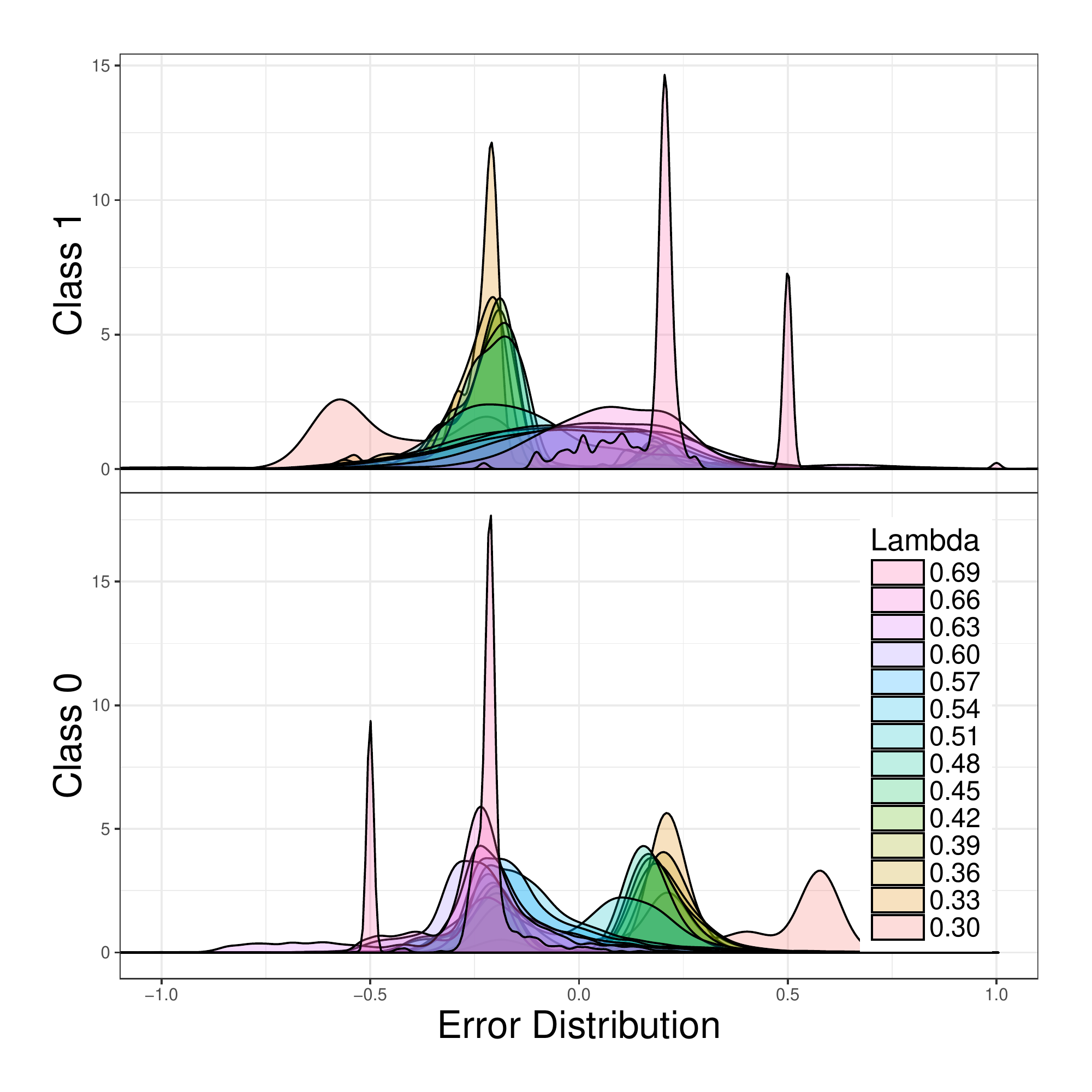}
\includegraphics[width=0.325\linewidth]{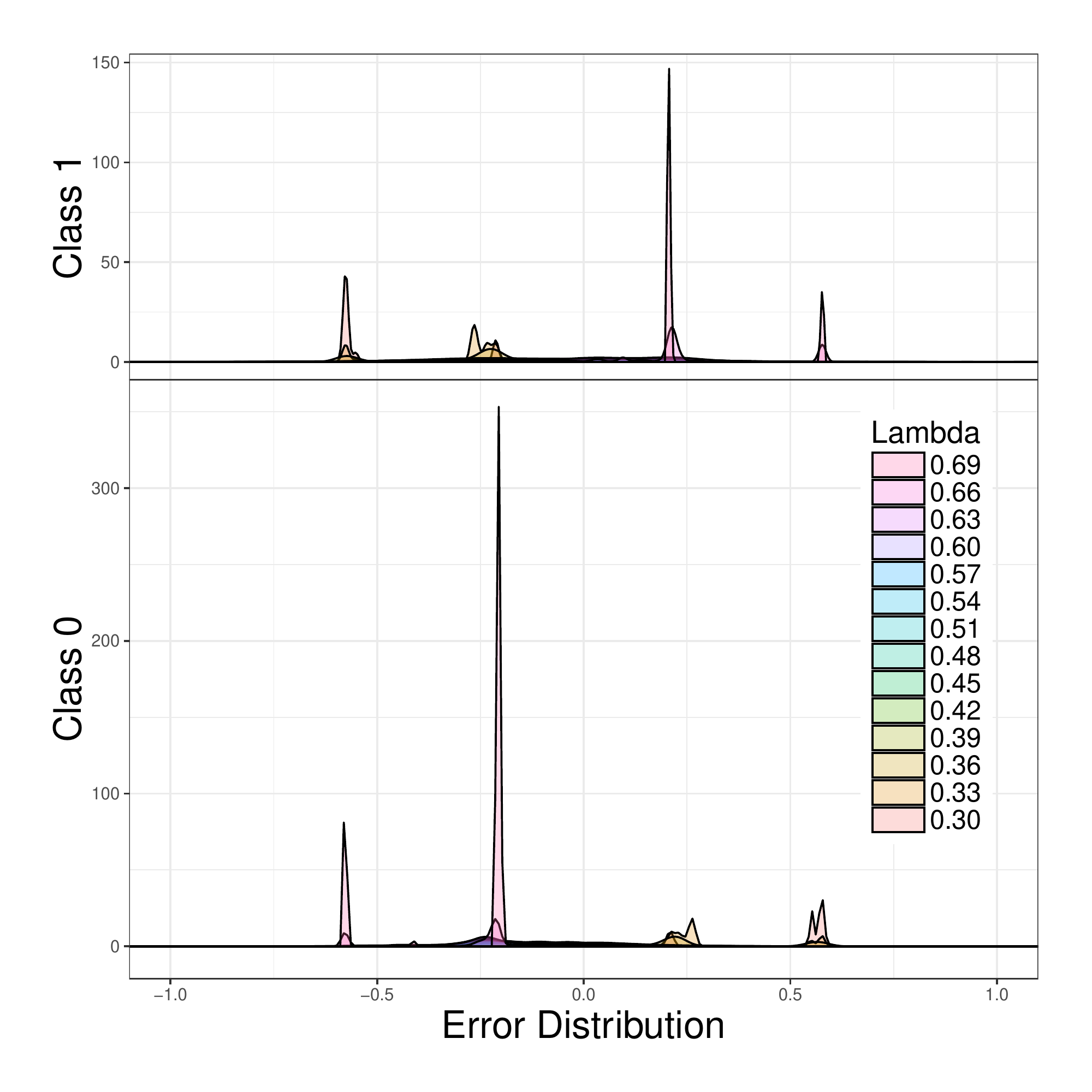}
\includegraphics[width=0.325\linewidth]{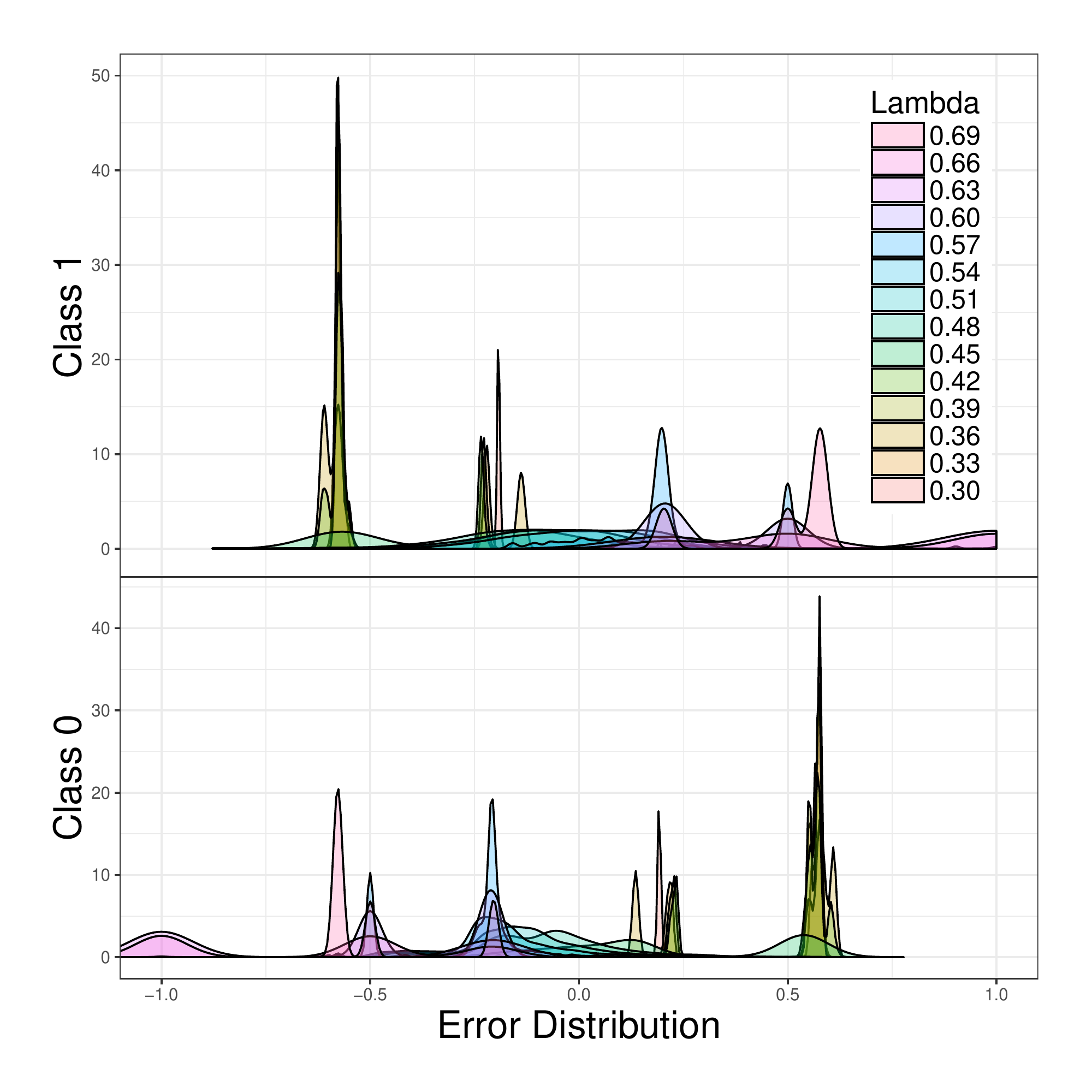}
\includegraphics[width=0.325\linewidth]{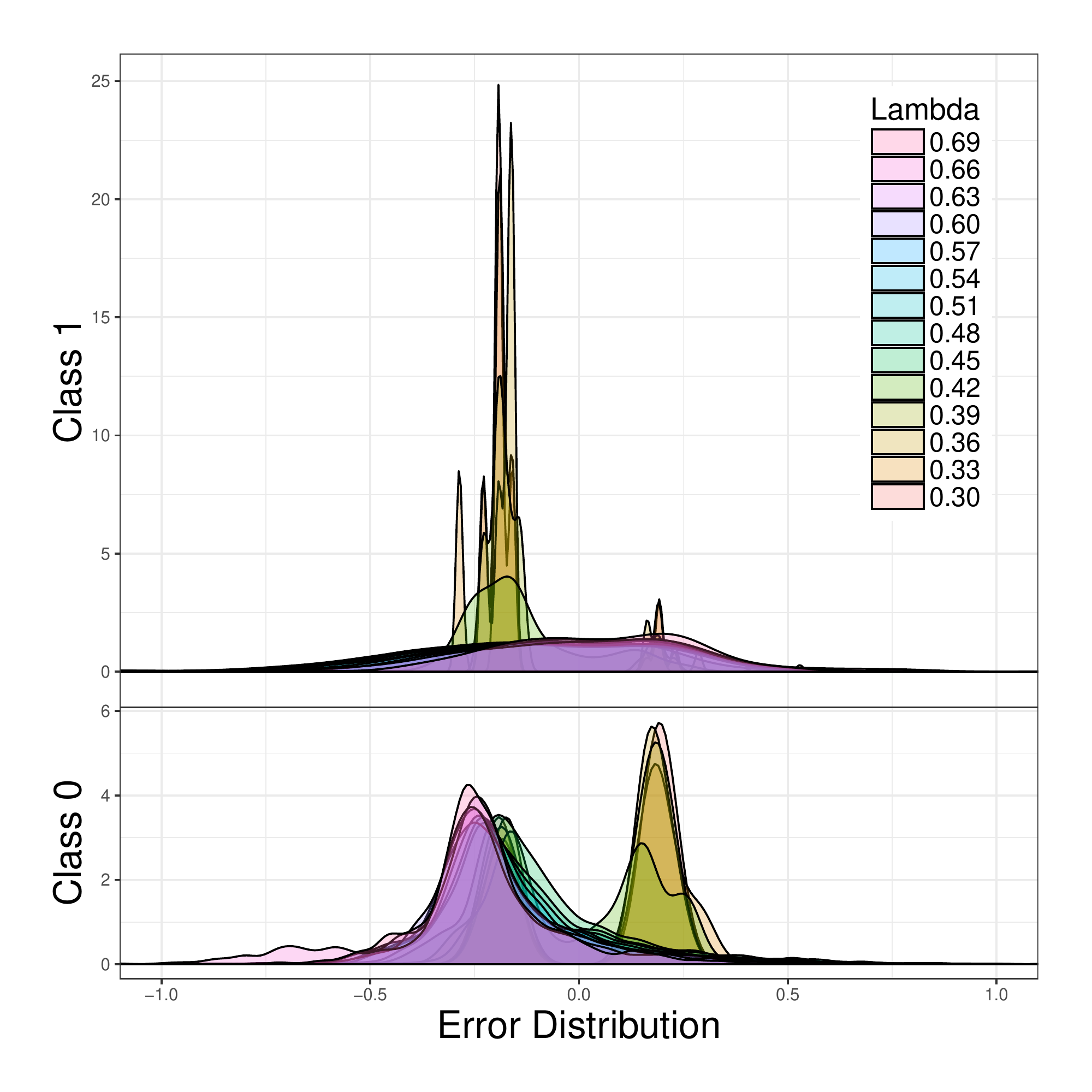}
\caption{The distribution of error displayed as smoothed histogram for each of five proposed formulations for the risk-averse SVM problem e.g. ``asym\_risk'', ``one\_cvar'', ``risk\_cvar'', ``two\_risk'', and ``two\_cvar'' all
using the same set of $\lambda$ values, with other parameters fixed, on the ``seismic-bumps'' dataset}
\label{fig:several_traverse1}
\end{figure*}


\subsection{Flexibility} 
\label{sub:flexibility}

Our approach provides additional flexibility which is generally not available for classification methods like soft-margin SVM. 
We allow the user to implement a predetermined attitude toward risk of misclassification, and to explore the Pareto-efficient frontier of classifiers.
The efficient frontier can be used to chose a risk-averse classifier according additional criterion as the F$_1$--score, AUC, or other similar performance metrics, as discussed in the previsou section.

We traverse the Pareto frontier by varying $\lambda$ from $0.4$ to $0.7$ and observe that the solution is rather sensitive to the scalarization used in the loss function.
In Figures \ref{fig:several_traverse1} 
, we show the resulting error densities from such a traversal.
We can observe how varying the weight between the two risk measures allows us to obtain a family of risk-averse Pareto-optimal classifiers.


The Pareto frontier looks substantially different when different combinations of risk measures are used. 
Further research would reveal the effect of higher order risk measures and their ability to create a classifier with highly discriminant powers.
We have chosen the probability level for the Average Value-at-Risk in a similar way.
We observe that the loss function ``one\_cvar'' consistently provides the best performance. 
A close second, is the loss function ``risk\_cvar,'' which has a similar structure.
Interestingly, using the same risk measure on both classes does not perform as well.


\section{Concluding Remarks} 
\label{sec:concluding_remarks}

This paper proposes a novel approach to classification problems by leveraging mathematical models of risk.
We have formulated several optimization problems for optimizing a classifier over a parametric family of functions. 
The problem's objective is a weighted sum of risk-measures, associated with the classification error of the classes: each class may be treated with an individual risk preference. 
We have shown the existence of an optimal risk-sharing classifier under mild assumptions. 
Additionally, we demonstrate that the optimal risk-sharing classifier also solves an implicit risk-neutral classification problem, in which the empirical probabilities of the data points are replaced by a probability distribution from the subdifferential of the risk-measures.   
We have provided a more specific problem formulation for the case of binary classification and have conducted experiments on three data sets.
Further, we have compared our approach to three benchmarks, which use the minimization of the total expected error, the Huber function, and the Average Value-at-Risk as presented in \cite{gotoh2017support}.
Our observations are the following. On the data sets for which traditional formulations perform well, the novel approach performs on par or slightly better depending on the particular choice of risk measures and parameters.
The proposed approach has an advantage on all data sets as measured by the F$_1$-score.
Exploring the Pareto-efficient frontier provides additional flexibility and is a tool for customizing the classifier.
As we see from the numerical results, we achieve larger recall or precision by adjusting the scalarization factor $\lambda$.
Overall, this is an extremely flexible approach which allows fine-tuning leading allowing the user to achieve the best possible result in the chosen metric.


\bibliographystyle{plain}

\end{document}